\newcounter{thm}
\theoremstyle{plain}
\newtheorem{lemma}[thm]{Lemma}
\newtheorem{corollary}[thm]{Corollary}
\theoremstyle{definition}
\newtheorem{definition}[thm]{Definition}
\newtheorem{example}[thm]{Example}
\newtheorem{remark}[thm]{Remark}
\crefname{theorem}{Theorem}{Theorems}
\crefname{proposition}{Proposition}{Propositions}
\crefname{lemma}{Lemma}{Lemmas}
\crefname{corollary}{Corollary}{Corollaries}
\crefname{definition}{Definition}{Definitions}
\crefname{remark}{Remark}{Remarks}
\crefname{example}{Example}{Examples}
\crefname{subsubsection}{\S}{\S}
\crefname{subsection}{\S}{\S}
\crefname{section}{\S}{\S}
\crefname{appendix}{Appendix}{Appendices}
\crefname{chapter}{Chapter}{Chapters}
\crefname{table}{Table}{Tables}
\crefname{figure}{Figure}{Figures}
\crefname{algorithm}{Algorithm}{Algorithms}
\newcommand{\R}{{\mathbb{R}}}
\newcommand{\N}{{\mathbb{N}}}
\newcommand{\cF}{\mathcal{F}}
\newcommand{\Omegatilde}{{\widetilde\Omega}}
\newcommand{\Diri}{\mathrm{Dir}}
\newcommand{\Conv}{\mathrm{ConvHull}}
\newcommand{\la}{\left\langle}
\newcommand{\ra}{\right\rangle}
\newcommand{\Uniform}[1]{\mathrm{Unif}(#1)}
\newcommand{\Gaussian}[2]{\mathcal{N}(#1,#2)}
\newcommand{\Prob}[1]{\mathcal{P}(#1)}
\newcommand{\ggc}{GGc}
\newcommand{\given}[2]{
  \left(#1 \;\middle\vert\; #2\right)
}
\newcommand{\blank}{{\hspace{0.18em}\cdot\hspace{0.18em}}} 
\newcommand{\tcset}{I\times\Omega}
\newcommand{\cpsi}{\bar \psi}  
\newcommand{\psidist}{{Q}}
\newcommand{\ODEsolve}{\mathtt{ODEsolve}}
\newcommand{\ourmethod}{EFM}
\DeclareMathOperator*{\argmin}{arg\,min}
\DeclareMathOperator{\Expect}{\mathbb{E}}
\DeclareMathOperator{\Div}{\mathrm{div}}
\DeclareMathOperator{\supp}{\mathrm{supp}}
\DeclareRobustCommand\mapstofill{%
  $\m@th
  {\mapstochar}%
  \smash-\mkern-7mu
  \cleaders\hbox{$\mkern-2mu\smash-\mkern-2mu$}\hfill
  \mkern-7mu
  \mathord\rightarrow
  $%
}
\DeclareSymbolFont{EulerExtension}{U}{euex}{m}{n}
\DeclareMathSymbol{\euintop}{\mathop} {EulerExtension}{"52}
\DeclareMathSymbol{\euointop}{\mathop} {EulerExtension}{"48}
\def\Set#1{\Setdef#1\Setdef}
\def\Setdef#1|#2\Setdef{\left\{#1\,\;\mathstrut\vrule\,\;#2\right\}}%
\def\Setdefs#1|#2\Setdefs{\left(#1\,\;\mathstrut\vrule\,\;#2\right)}%
\title{\mbox{Extended Flow Matching}: a Method of Conditional Generation with \mbox{Generalized Continuity Equation}}
\author{%
  Noboru Isobe\thanks{equal contribution}\\
  Graduate School of Mathematical Sciences\\
  University of Tokyo\\
  Tokyo, Japan \\
  \texttt{nobo0409@g.ecc.u-tokyo.ac.jp}\\
  \And
  Masanori Koyama$^\ast$\\
  Preferred Networks \\
  Tokyo, Japan \\
  \And
  Jinzhe Zhang\\
  Preferred Networks \\
  Tokyo, Japan \\
  \AND
  Kohei Hayashi \\
  Preferred Networks \\
  Tokyo, Japan \\
  \And
  Kenji Fukumizu \\
  The Institute of Statistical Mathematics / Preferred Networks\\
  Tokyo, Japan \\
}
\begin{document}

\maketitle

\begin{abstract}
    The task of conditional generation is one of the most important applications of generative models, and numerous methods have been developed to date based on the celebrated flow-based models. 
    However, many flow-based models in use today are not built to allow one to introduce an explicit inductive bias to how the conditional distribution to be generated changes with respect to conditions. This can result in unexpected behavior in the task of style transfer, for example. 
    In this research, we introduce extended flow matching (EFM), a direct extension of flow matching that learns a \textit{matrix field} corresponding to the continuous map from the space of conditions to the space of distributions. 
    We show that we can introduce inductive bias to the conditional generation through the matrix field and demonstrate this fact with MMOT-EFM, a version of EFM that aims to minimize the Dirichlet energy or the sensitivity of the distribution with respect to conditions. 
    We will present our theory along with experimental results that support the competitiveness of EFM in conditional generation.
\end{abstract}

\section{Introduction}\label{sec:intro}
Conditional generation is a task of generative models with significant importance in industrial and scientific applications, and it can be more mathematically described as a generation of random function $\psi\colon\Omega\ni c \mapsto x_c \in D$ that maps a ``condition'' to a data. 
Historically, such as in VAE~\citep{kingma2013auto} or GAN~\citep{goodfellow2020generative}, this random $\psi$ has been written in the form of generator $f(c, z) = \psi(c)$ with $z$ being a random sample generated from uninformative distribution. 
Conditional generation with continuous $c$ is of specific importance in applications like molecule design that involves inverse problems, and such studies have motivated studies such as \cite{ding2021ccgan}.

The focus of this paper is the conditional generation with the Flow matching method for continuous conditions. 
Branching from the work of \citet{lipman2023flow}, Flow Matching (FM) methods have emerged recently as a simulation-free alternative to the family of diffusion models \citep{ho2020denoising, sohl2015deep, song2020denoising}. 
In particular, with the development of powerful techniques such as OT-CFM \citep{tong2023improving}, FM is beginning to extend over a range of applied fields \citep{davtyan2023efficient, gebhard2023inferring, bose2023se,klein2023equivariant}, becoming utilized for various purposes of conditional generation. 
By aiming to minimize the kinetic \textit{work} consumed in transporting the source to the target distribution, these methods ensure inference of higher quality with low computational cost and stable training. 

However, OT-CFM literally does not introduce any inductive bias to how $\psi$ behaves with respect to the perturbation in $c$, while it uses the inductive bias regarding the energy consumed in interpolating between source and target distributions. 
When naively applied to conditional generation for continuous conditions, interpolations and style transfer may result in \textit{unintuitive} outcome.
For example, consider the example of the conditional generation problem with four conditional distributions with two clusters each (\cref{fig:clusters}). 
\begin{figure}[htbp]
\begin{center}
   \includegraphics[scale=0.35]{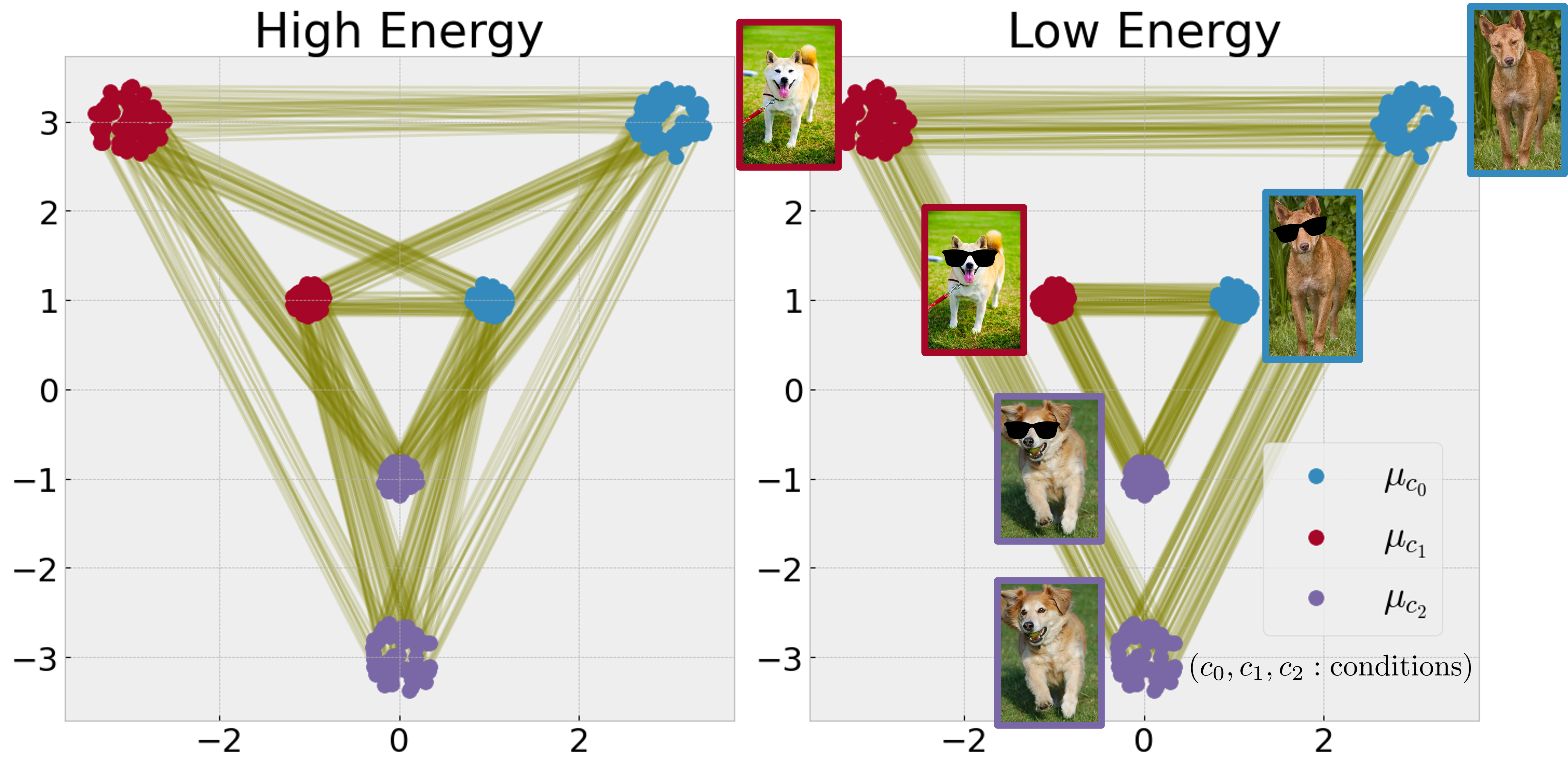}
 \caption{Across-condition transfer by the conditional generative model trained with three conditional distributions with two clusters each ($\mu_{c_0}$, $\mu_{c_1}$, $\mu_{c_2}$). When the average sensitivity of the distribution with respect to $c$ (Dirichlet energy) is not optimized, the inner cluster may mix with the outer cluster (left). Meanwhile, if the energy is optimized (right), the transfer would respect the separation of inner vs outer clusters. }
\end{center}
 \label{fig:clusters}
\end{figure}
The cluster in this example can be another hidden axis of condition (inner, outer). 
When we apply the style transfer of the OT-CFM model by continuously interpolating the condition with a fixed source element, the transfer trajectory ``mixes'' the inner cluster with the outer cluster, which may betray our inductive bias in predicting the interpolation. 
Indeed, this happens because OT-CFM literally does not introduce any inductive bias to how $\psi$ behaves with respect to the perturbation in $c$, while it uses the inductive bias regarding the energy consumed in interpolating between source and target distributions.

In this paper, we introduce extended flow matching (EFM), a direct extension of FM that allows the user to introduce an inductive bias regarding the aforementioned $\psi$.  
While FM designs the interpolation between source and target with a vector field, EFM also designs the interpolations conditional targets as well through the learning of \textit{matrix} field.
In particular, we present Multi-marginal optimal transport (MMOT) EFM, which extends OT-CFM with the aim of minimizing $\Expect[\|\nabla_c \psi \|^2]$ or the sensitivity of $\psi$ with respect to the change in condition.   

We summarize our contributions below:

\begin{enumerate}
\item We present \ourmethod, an algorithm to learn the matrix field that, through a generalized continuity equation \citep{LAVENANT2019688}, corresponds to a continuous map interpolating among source and conditional target distributions.
\item Through \ourmethod, one can introduce inductive bias to how the generation changes with respect to continuous conditions. In particular, we present MMOT-\ourmethod, a version of \ourmethod\ that aims to minimize the energy required to transport one condition to another.
\item We prove that \ourmethod\ can be described through per-example/conditional formulation and establish it as a direct generalization of the original flow-matching algorithm. 
\item We demonstrate that \ourmethod\ performs competitively on the task of conditional generation and also demonstrate that our matrix field simultaneously derives the vector field along the direction of the condition space, which can be used for the task of style transfer. 
 \end{enumerate}

We begin by first describing FM and OT-CFM 
in \cref{sec:usual_Flow_matching} , and describe in \cref{sec:EFMtheory} our theory underlying the algorithm of extended flow matching.  
We then introduce our \ourmethod\ algorithm in \cref{sec:EFMalg}, discuss the related works in \cref{sec:related}, and demonstrate its application in \cref{sec:exp}.




\subsection*{Notation}\label{subsec:Notation}
Let us use $\blank$ to denote a placeholder, $\norm{\blank}$ to denote the Euclidean norm, and $0_k\coloneqq(0,\dots,0)^\top\in\R^k$ to denote the zero vector.
We denote by $\Prob{M}$ the space of probability distributions on a metric space $M$, and denote by 
$\delta_x\in\Prob{M}$ the delta distribution supported on $x\in M$.
    For a distribution $\mu\in\Prob{M}$ on $M$ and a vector-valued function $f$ on $M$, we denoted by $\Expect_{X\sim\mu} [f(X)]$ the expectation of a random variable $f(X)$, where $X\sim\mu$ is a random variable following $\mu$.

We also denote $I\coloneqq\qty[0,1]$ and $\qty[m:n]\coloneqq\{m,m+1,\dots,n\}$ for $m$, $n\in\N$ such that $m<n$.
For a function $g$ on $I$, we write $\dot{g}(t)$ for the derivative $\dv{g}{t}{(t)}$ with respect to time $t\in I$.
Further, we let $D\subset\R^d$ be the data space. 
For any subscript $\xi$, we will denote by $p_\xi$ the density of a probability distribution $\mu_\xi$ on $D\subset\R^d$, i.e.,
$
    \mu_\xi(\dd x)=p_\xi(x)\dd x
$
in a measure-theoretic notation.
In the following mathematical discussion, we will assume that any probability distribution has a density, but this assumption is superficial and is used only for simplicity of explanation. 

\section{Preliminaries}\label{sec:usual_Flow_matching}
 \subsection{Flow Matching (FM)} \label{sec:FM}
\noindent \textbf{Continuity Equation}:     As a method of generative modeling, the goal of FM is to learn a map that transforms a source distribution to a target distribution in the form of $\mu\colon [0,1] \to \Prob{D}$, where $D$ is the space of dataset.   
Instead of learning $\mu$ directly, flow matching as a method learns a vector field $v\colon [0, 1] \times D \to \R^d$ such that the \emph{continuity equation} (CE)
\begin{equation}
\partial_tp_t(x)+\Div_x(p_t(x)v(t,x))=0.\label{eq:CE}
\end{equation} 
holds with respect to the density $p_t$ of $\mu_t$, and we use this $v$ for the sample generation.  \\
\noindent \textbf{Inference}: $X_1 \sim \mu_1$ can be sampled by solving the ODE with  $\dot{X}(t)=v(t,X(t))$, $X(0)\sim p_0$.  

\subsection{OT-CFM} \label{sec:OT-CFM}
\textbf{Objective energy}: OT-CFM, in particular, can be said to minimize the Dirichlet energy, or the energy of transport for $\mu$ conditional to the boundary condition $\mu_0 = \mu_{\mathrm{source}}, \mu_1 = \mu_{\mathrm{target}}$. 
Formerly, Dirichlet or the kinetic energy of the curve $\mu$ can be written as 
\begin{align}
    \Diri(\mu) \coloneqq \inf_{v\colon I \times D \to \R^d} \Set{\frac12\iint_{I\times D}\|v(t, x)\|^2p_t(x)\dd x\dd t  | \text{The pair }(p,v)\text{ satisfies }\eqref{eq:CE}  }. \label{eq:dir_1d} 
\end{align}
\textbf{Objective function}: To derive the algorithm used in OT-CFM, let us present $\mu$ as 
\begin{align}
\mu^\psidist = \Expect_{\psi\sim Q}[\mu^\psi],\label{eq:superposition} 
\end{align}
where $\mu^\psi_t(\dd x)  = \delta_{\psi(t)}(\dd x)$ is the point-mass distribution at $\psi(t)$ and $\psidist$ is a distribution over a space $H(I; D)\coloneqq\Set{\psi\colon I\to D|\psi\text{ is differentiable}}$ of paths that maps time $t$ to an instance $x \in D$.
The random $\psi$ that appears here is indeed 
analogous to $\psi$ in \cref{sec:intro}, except that the space $\Omega$ is replaced with time interval $I$.
Amazingly, because $\Diri$ turns out to be convex, we can bound 
$\Diri(\mu^Q)$ from above by 
$\int \mathrm{Dir}(\mu^\psi) Q(\dd \psi) = \iint \| \dot \psi(t) \|^2 \dd t Q(\dd \psi)$, and its minimization with respect to $Q$ conditional to $\mu^Q_0 = \mu_{\mathrm{source}}$, $\mu^Q_1 = \mu_{\mathrm{target}}$ turns out to be concentrated on the set of ``straight lines'' $\psi\given{t}{x_1, x_2}= t x_2  + (1-t)x_1  $ between joint samples $(x_1, x_2)$ from the target and the source, allowing the \textit{parametrization} of $Q$ with the joint distribution $\pi$ with marginals $\mu_{\mathrm{source}}$ and $\mu_{\mathrm{target}}$ \citep{Brenier2003, AGS}
This would allow us to write $\| \dot\psi\given{t}{x_1,x_2} \|^2  =   \| x_1 - x_2 \|^2$ for the optimal $Q^*$. 
This would reduce the optimization with respect to $Q$ to the classic optimal transport problem for the joint probability $\pi$ with cost $c(x,y) = \|x -y \|^2$, which can be approximated through batches.
Following the same logic as in \cite{kerrigan2023functional}, or our later theorem (\cref{thm:PMA}) that generalizes the denoising score matching in the flow setting, the flow corresponding to  $\mu_{{Q}^\ast}$ can be obtained as the minimizer of
\begin{align}
\Expect_{\psi\sim Q^\ast} [\|v(t, \psi(t)) - \dot\psi(t)\|^2 ]=  \Expect_{(x_1, x_2) \sim \pi^*}   [\|v(t, \psi(t)) - \dot\psi\given{t}{x_1, x_2}\|^2].
\end{align}
This derives the learning of $v(t, \psi(t))$ through a neural network $v_\theta$ as shown in \cref{alg:FM_train}.
Indeed,  Dirichlet energy that OT-CFM is aiming to minimize is a form of inductive bias regarding the continuity of the \textit{generation} process with respect to time $t$.  
In naive application of OT-CFM to conditional generation, $\psi(t)$ will be replaced with $\psi(t, c)$, but again, the energy in this situation only considers $\| \partial_t \psi(t, c)\|^2$.

\section{Theory of \ourmethod}\label{sec:EFMtheory}

In this section, we extend the standard FM theory to consider conditional probability with conditions $c$ within a bounded domain $\Omega\subset\R^k$.
In this section, we extend the standard FM theory to consider conditional probability with conditions $c$ within a bounded domain $\Omega\subset\R^k$.
Let $p_c(x)\coloneqq p\given{x}{c}$ be the unknown target conditional probability density, and let $p_{0,c}(x)\coloneqq p_0\given{x}{c}$ be a user-chosen tractable conditional density given $c=(c^i)_{i\in\qty[1:k]}=(c^1,\dots,c^k)\in\Omega$, such as normal distributions with mean and variance parameterized by $c$.  
We will use the notation in \hyperref[subsec:Notation]{the previous section}.
That is, denote by $\mu_c$ and $\mu_{0,c}$ the distribution of the probability density function $p_c$ and $p_{0,c}$, respectively. 

\begin{figure}[htb]
\centering
\includegraphics[scale=.8]{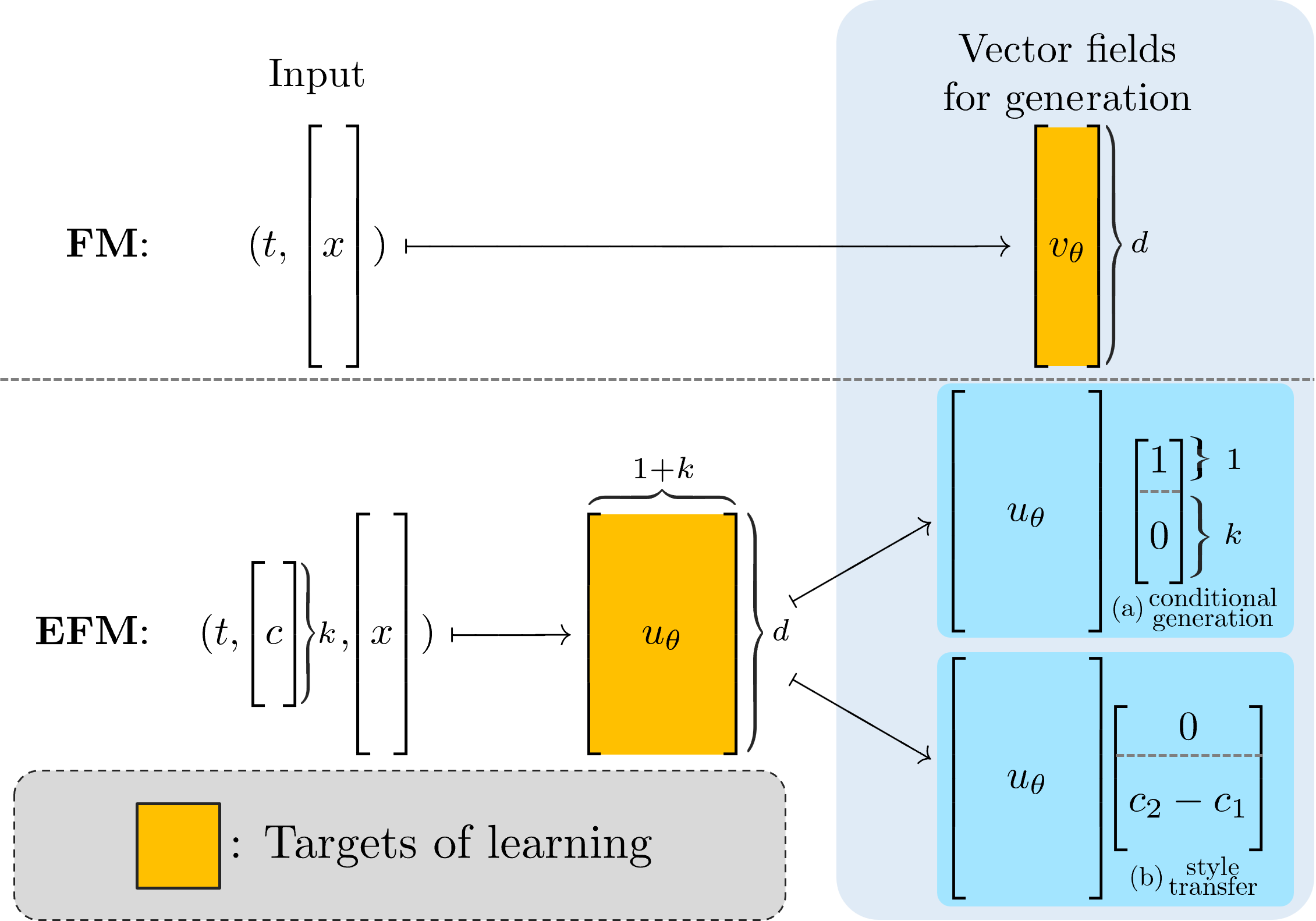}
\caption{
Inferences of FM and \ourmethod.
} \label{fig:diff} 
\end{figure}
\subsection{Extension of FM}\label{subsec:gen}
 We will present this subsection in parallel with \cref{sec:FM}. \\
\noindent \textbf{Generalized Continuity Equation}: We 
directly extend the interpretation of FM by extending the domain of $\psi$ in \eqref{eq:superposition} from $I$ to $\tcset$, where $\Omega$ is the space of conditions.
For brevity, instead of using explicit $\tcset$, we would like to use a general bounded domain $\Xi$ in Euclidean space as an analogue of $\Omega$ of the previous section and analogously set the goal of EFM to the learning of $\mu\colon \Xi \to \Prob{D}$.
Now, just like FM, instead of learning $\mu$ directly, EFM aims to learn a \textit{matrix} field $u\colon\Xi \to \R^{d \times \dim\Xi}$ such that 
\textit{generalized CE}~\citep{Brenier2003,LAVENANT2019688}
\begin{align}
    \nabla_{\xi} p_{\xi}(x) +\Div_x(p_{\xi}(x)u(\xi,x))=0~
    ((\xi,x)\in\Xi\times D&),
    \label{eq:GCE}
\end{align}
holds for the density $p_{\xi}$ of $\mu_{\xi}$. 

\noindent \textbf{Inference}: Inference based on $u$ is slightly more complicated than in FM, which provides a single vector field on which to integrate the ODE. 
When $\Xi = \tcset$,  the generation for condition $c$ will be done by transforming $\mu_{0, c} \to \mu_{1,c}$, and the transfer from $c$ to $c^\prime$ by transforming $\mu_{1,c} \to \mu_{1,c^\prime}$. 
These are both done through integrating the matrix field along the path in $\tcset$.  
More precisely, the following result justifies our use of the matrix field $u$ in \eqref{eq:GCE} to achieve the goal of conditional generative modeling:

\begin{proposition}{GCE generates $\boldsymbol{\gamma}$-induced CE}{gamma_wise_gen}
    Let $\mu\colon\Xi \to\Prob{D}$ and $u\colon \Xi\times D \to\R^{d\times\dim\Xi}$ be a probability path and a matrix field, respectively, that satisfy \eqref{eq:GCE}.
    Then, for any differentiable path $\gamma\colon I\to\Xi$, the $\gamma$-induced probability path $\mu^\gamma\coloneqq \mu\circ\gamma$ and the $\gamma$-induced vector field $v^\gamma\colon I \times D \ni(s,x)\mapsto u(\gamma(s), x)\dot\gamma(s)\in\R^d$ satisfy the continuity equation, i.e., the density $p^\gamma$ of $\mu^\gamma$ and $v^\gamma$ satisfy
    $        \partial_sp^\gamma_s(x)+\Div_x(p^\gamma_s(x)v^\gamma(s,x))=0.
    $ 
\end{proposition}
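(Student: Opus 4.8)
The plan is to prove the statement by a direct chain-rule computation, reading the generalized continuity equation \eqref{eq:GCE} componentwise. I would write the matrix field in terms of its columns as $u=(u_1,\dots,u_{\dim\Xi})$, where each $u_j\colon\Xi\times D\to\R^d$ is an ordinary ($x$-)vector field associated with the coordinate direction $\xi^j$. With this convention, \eqref{eq:GCE} is equivalent to the system of $\dim\Xi$ scalar equations $\partial_{\xi^j}p_\xi(x)+\Div_x(p_\xi(x)u_j(\xi,x))=0$ for every $j\in\qty[1:\dim\Xi]$. The target continuity equation, by contrast, is a single scalar equation in the variables $(s,x)$, so the whole argument amounts to showing how that single $s$-derivative reassembles itself out of these $\dim\Xi$ conditions as one travels along the curve $\gamma$.

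First I would differentiate $p^\gamma_s(x)=p_{\gamma(s)}(x)$ with respect to $s$. Since $p^\gamma_s$ is the composition of the map $\xi\mapsto p_\xi(x)$ with the path $\gamma$, the chain rule gives $\partial_s p^\gamma_s(x)=\sum_j \partial_{\xi^j}p_\xi(x)\big|_{\xi=\gamma(s)}\,\dot\gamma^j(s)=\nabla_\xi p_{\gamma(s)}(x)\cdot\dot\gamma(s)$. Next I would substitute the generalized continuity equation in the form $\partial_{\xi^j}p_\xi(x)=-\Div_x(p_\xi(x)u_j(\xi,x))$, which yields $\partial_s p^\gamma_s(x)=-\sum_j \Div_x(p_{\gamma(s)}(x)u_j(\gamma(s),x))\,\dot\gamma^j(s)$. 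The crucial observation is that each scalar $\dot\gamma^j(s)$ is independent of $x$, so it may be pulled inside the $x$-divergence and the sum collapsed back into a matrix--vector product, giving $\partial_s p^\gamma_s(x)=-\Div_x\!\big(p_{\gamma(s)}(x)\,u(\gamma(s),x)\dot\gamma(s)\big)$. Recognizing $p_{\gamma(s)}=p^\gamma_s$ and $u(\gamma(s),x)\dot\gamma(s)=v^\gamma(s,x)$ by definition then completes the desired identity.

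I expect the only genuinely delicate point to be bookkeeping and regularity rather than any deep idea. I must keep the row/column conventions consistent, so that $\nabla_\xi p$ contracts against $\dot\gamma\in\R^{\dim\Xi}$ on one side while the product $u\dot\gamma\in\R^d$ is exactly the vector fed into $\Div_x$ on the other; and I must ensure enough smoothness of $p$ and $\gamma$ (for instance $p_\xi(x)$ jointly $C^1$ in $\xi$ together with $\gamma$ differentiable) to license both the chain rule and the interchange of the $x$-independent factor $\dot\gamma^j(s)$ with $\Div_x$. Under the differentiability hypotheses already granted in the statement these are routine, and nothing beyond the linearity of the divergence in its vector argument is required.
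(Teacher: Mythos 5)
Your proof is correct and follows essentially the same route as the paper's rigorous version (\cref{prop:induced_flow} in the Appendix): there, too, the heart of the argument is the identity $\partial_s\rho(\gamma(s),x)+\Div_x\qty(\rho(\gamma(s),x)v^\gamma(s,x))=\qty(\nabla_\xi\rho(\gamma(s),x)+\Div_x(\rho(\gamma(s),x)v(\gamma(s),x)))\dot\gamma(s)$, i.e.\ contracting the generalized continuity equation with the $x$-independent vector $\dot\gamma(s)$ and pulling it through the divergence, exactly as you do columnwise. The only difference is that the paper, working with distributional solutions, first invokes \citep[Proposition 3.16]{LAVENANT2019688} to replace $v$ by a gradient field $\nabla_x\varphi$ satisfying the equation pointwise, which is the regularity bookkeeping you correctly flagged as the one delicate point under the smoothness hypotheses of the statement.
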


\begin{figure}[htbp]
        \centering
        \includegraphics[keepaspectratio, scale=1]{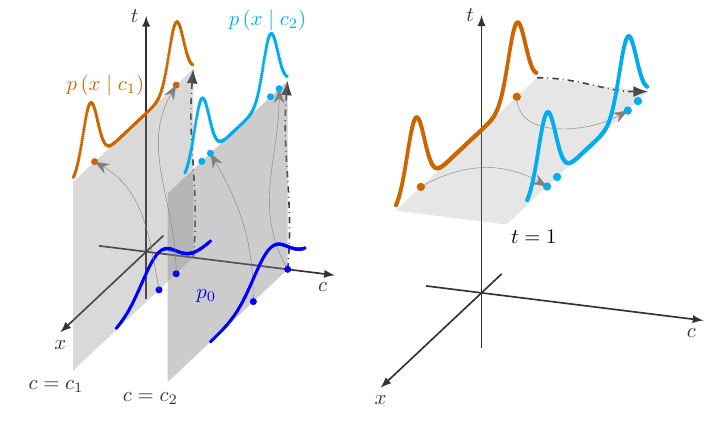}
        \\
        \includegraphics[keepaspectratio, scale=1]{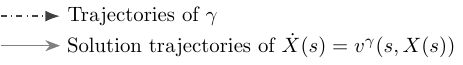}
    \caption{Visualization of the flow for (a) conditional generation along $\gamma^{c_1}$ and $\gamma^{c_2}$ (\cref{alg:gen}), and
(b) style transfer along $\gamma^{c_1\to c_2}$ (\cref{alg:transfer}).} 
\label{fig:gen&trans}
\end{figure}

The rigorous version of \cref{prop:gamma_wise_gen} is given in \cref{prop:induced_flow} in the Appendix.
\cref{prop:gamma_wise_gen} shows that the flow on $D$ corresponding to an arbitrary probability path on $\Set{\mu_{\xi}\in\Prob{D} | \xi \in \Xi }$ can be constructed from the $\gamma$-induced vector field obtained from multiplying the matrix $u$ to the vector $\dot{\gamma}$.
Thus, once the matrix field $u$ is obtained, the desired vector field $v^\gamma$ is to be calibrated by choosing an appropriate $\gamma$ that suits the purpose of choice.
When the pair of $p_{\xi}$ and $u_{\xi}$ satisfies GCE \eqref{eq:GCE}, the designs of $\gamma$ in the following two examples possess significant practical importance (See \cref{fig:diff} and \cref{fig:gen&trans} ): 


\begin{example}[Conditional generation]\label{ex:cond_gen}
    When the goal is to sample from the unknown conditional distribution $\mu_{c_\ast}$ given condition $c_\ast\in\Omega$, 
    we can choose $\gamma^{c_\ast}\colon I\to\tcset$ such that $\gamma^{c_\ast}(1)=(1,c_\ast)$; 
    typically, we can set $\gamma^{c_\ast}(s)=(s,c_\ast)$ for $s\in I$.
    Then, by virtue of \cref{prop:gamma_wise_gen} and the continuity equation \eqref{eq:CE}, we only need to compute the flow $\phi$ by solving the ODE
    $
    \left\{
    \begin{aligned}
        &\dot{\phi}_s(x_0)
        = u(s, c_\ast,{\phi}_s(x_0)) 
        \sbmqty{1\\0_k}
        \ (s\in I) , \\
        &x_0\sim \mu_{0,c_\ast},
    \end{aligned}
    \right.
    $
    and obtain samples $\phi_1(x_0)$ from $\mu_{1,c}=\mu_c$.
    The trajectories in the front and rear plane of (a) in \cref{fig:gen&trans} respectively represent the flows corresponding to this example with $c_\ast = c_1$ and $c_\ast = c_2$.
\end{example}
\begin{example}[Style transfer]\label{ex:style_trans}
    When the goal is to transform a sample generated from $\mu_{c_1}$ to a sample of another distribution $\mu_{c_2}$ given $c_2\in\Omega$, 
    we may choose $\gamma^{c_1\to c_2}\colon I\to\tcset$ satisfying $\gamma^{c_1\to c_2}(0)=(1,c_1)$ and $\gamma^{c_1\to c_2}(1)=(1,c_2)$.
    For example, we can set $\gamma^{c_1\to c_2}(s)=(1,(1-s)c_1+sc_2)$ for $s\in I$.
    In this case, we only need to solve the ODE 
    $
    \left\{
    \begin{aligned}
            &\dot{\phi}_s(x_0)
            = u(1, \gamma^{c_1\to c_2}(s),\phi_s(x_0) )
            \sbmqty{0\\c_2-c_1}
            \ (s\in I),\\  
            &x_0\sim\mu_{c_1}.
    \end{aligned}
    \right.
    $
    The solution trajectories in (b) in \cref{fig:gen&trans} represent the flows corresponding to this style transfer.
\end{example}

\subsection{Convex energy and MMOT-EFM} 
Now we extend the arguments in \cref{sec:OT-CFM} to EFM. \\
\textbf{Objective energy}: Just like in \cref{sec:OT-CFM},  we use the representation of $\mu$ as \eqref{eq:superposition} through a distribution $Q$ over a space $H(\Xi; D)$ of differentiable maps $\psi$ from $\Xi$ to $D$.
Now, the construction of EFM allows us to introduce inductive bias regarding a property of $\psi\colon \Xi \to D$ and hence how $\mu$ behaves with respect to $\xi$.  In particular, 
if a given energy $\mathcal{E}$ with respect to $\mu^\psi$ is convex, then by simple Jensen's inequality we can bound  $\mathcal{E}(\mu)$ from above by $\Expect_{\psi\sim Q} [\mathcal{E}(\mu^\psi)]$. 
Please also see \cref{prop:Dirichlet_Jensen,prop:Dirichlet_bound} for more precise statements
of these results. In MMOT-EFM, we consider the case in which $\mathcal{E}$ is the following generalization of \eqref{eq:dir_1d}.

A generalized version of Dirichlet energy \citep{LAVENANT2019688} of a function $\mu\colon\Xi\to\Prob{D}$ is given by
\begin{align}
    \Diri(\mu) \coloneqq \inf_{v\colon \Xi \times D \to \R^d} \Set{\frac12\iint_{\Xi \times D}\|u(\xi, x)\|^2 p_\xi(x)\dd x\dd \xi  | \text{The pair }(p,u)\text{ satisfies }\eqref{eq:GCE}  } \label{eq:dir_multi} 
\end{align}
where $p_\xi$ is the density of $\mu_\xi$. 
This energy is of great practical importance because it also measures how great $\mu$ changes with respect to $\xi$ \citep{Reshetnyak1997}. \\
\textbf{Objective function}:  Unfortunately, unlike in the case of OT, the energy-minimizing $\mu$ that can be written as $\mu = \mu^Q\coloneqq\Expect_{\psi\sim Q}[\mu^\psi]$ is not necessarily achieved with $Q$ concentrated on straight ``planes'' interpolating joint samples from $\{\mu_{\xi}\}$, so we choose to constrain the search of $\psidist$ to a specific subspace $\mathcal{F}$ of $H(\Xi; D)$, such as RKHS.
In this search, we indeed also require $Q$ to satisfy the boundary condition (BC) that
\begin{align}
\Expect_{\psi\sim Q}\qty[\delta_{\psi(\xi)}]=\mu_{\xi}~(\xi\in A), \label{eq:BC_psi}
\end{align}
where $A\subset\Xi$ is a finite set for which $\mu_{\xi}$ $(\xi\in A)$ is either known or observed. 
To this end, if $\boldsymbol{x}_A\coloneqq (x_{\xi})_{\xi \in A}$ for $A \subset \Xi$ is a joint sample with  $x_{\xi} \sim \mu_\xi$, then let $\phi\colon D^{|A|} \to \mathcal{F}$ defined by the regression 
\begin{align}
\psi(\cdot) \coloneqq  \phi\given{\cdot}{\boldsymbol{x}_A} \in \argmin_{f \in \mathcal{F}} \sum_{\xi \in A}\| f(\xi) - x_{\xi}\|^2. \label{eq:regression}
\end{align}
If $\pi$ is a joint distribution on $D^{|A|}$, the parametrization $Q = \phi_\# \pi$ allows us to bound the energy from above in the following way: 
\[
    \inf_Q \Diri(\mu^Q) \leq \inf_{Q} \int \Diri(\mu^\psi) Q(\dd\psi) = \inf_{Q} \int \|\nabla_\xi \psi\|^2 Q(\dd\psi) \leq  \inf_{\pi} \int \|\nabla_\xi \phi( \cdot | \boldsymbol{x}_A) \|^2 \pi(\dd\boldsymbol{x}_A ) 
\]
Now observe that the upper bound is the form of a marginal optimal transport problem about $\pi$ with marginals $\mu_A$  and $c ( \boldsymbol{x}_A) = \int_\Xi\|\nabla_\xi \phi\given{\xi}{\boldsymbol{x}_A}) \|^2\dd\xi$, whose solution $\pi^*$ can be approximated with batch as in the OT-CFM case. 
See \cref{tab:compare_EFM_FM} for the parallel correspondence between MMOT-EFM and OT-CFM.

\begin{table}[htbp]
\centering
\caption{Constructions of $\psi\colon[0,1] \to D$ and $\cpsi\colon\Omega \to D$ and $\pi$ in OT-CFM and MMOT-EFM. Note that they agree when $\mathcal{F}$ is a set of linear functions from $\Omega$ to $D$  and when $\Omega = [0, 1] \subset \R $.}
\label{tab:EFM_comparison}
\setlength\tabcolsep{1pt} 
\begin{tabular}{@{}lcc@{}}
\toprule
 & OT-CFM  &  MMOT-EFM \\ \midrule
Interpolator & $\psi\given{t}{x, y} =  t x + (1-t)y$ & $\cpsi\given{\cdot}{\boldsymbol{x}=(x_i)_i} \in \argmin\limits_{\phi \in \mathcal{F}} \sum_{i} \norm{\phi(c_i) - x_i}^2$ \\
Cost 
& 
\begin{tabular}{rc}  
&$\iiint\limits_{[0, 1]\times D \times D}\hspace{-10pt}\|\dot \psi\given{t}{ x, y}\|^2 \dd{t}\pi(\dd x, \dd y) $\\
$(=$&$\iint\limits_{D \times D}  \norm{x-y}^2 \pi(\dd x, \dd y))$ \end{tabular}  & $\displaystyle\iint\limits_{\Omega\times D^{\abs{C}}} \norm{\nabla_c \cpsi\given{c}{\boldsymbol{x}} }^2 \dd{c} \pi(\dd \boldsymbol{x})  $     \\ 
\bottomrule
\end{tabular} \label{tab:compare_EFM_FM}
\end{table}

The \cref{thm:PMA} which we provide in the following allows us to train the $u$ corresponding to $\pi^\ast$ and hence $\mu^{Q^\ast}$ as the minimizer of 
\begin{align}
\Expect_{Q^\ast} [\|u(t, \psi(t)) - \nabla_\xi \psi(\xi)\|^2 ]=  \Expect_{\boldsymbol{x}_A \sim \pi^\ast}   [\|u(t, \psi(t)) - \nabla_\xi  \phi( \cdot | \boldsymbol{x}_A) \|^2] \label{eq:EFM-obj}
\end{align}
which we would use as the objective function of MMOT-EFM. 

\begin{theorem}{Fundamental theorem for \ourmethod}{PMA}
Assume we have a random path $\psi\sim \psidist\in\Prob{H(\Xi;D)}$ that satisfies \eqref{eq:BC_psi} and let $\mu_{t,c}=\Expect_{\psi\sim\psidist}\qty[\delta_{\psi(\xi)}]$ for $\xi\in\Xi$.
For neural networks $u_\theta$, set
\begin{equation}
    \mathcal{L}^{\prime}(\theta)=\int_{\Xi}\Expect_{\psi\sim\psidist}\norm{u_\theta(\xi,\psi(\xi))-\nabla_{\xi}\psi(\xi)}^2\dd \xi.  \label{eq:EFM_cond_obj} 
\end{equation}
If there exists a matrix field $u\colon\Xi\times D\to\R^{d\times(1+k)}$ satisfying \eqref{eq:GCE}, 
then it holds that
$\nabla_\theta\mathcal{L}(\theta)=\nabla_\theta\mathcal{L}^\prime(\theta)$ for $\theta\in\R^p$.
Here, we set
\[
    \mathcal{L}(\theta)\coloneqq\int_{\Xi}\Expect_{x\sim\mu_\xi}\norm{(u_\theta-u)(\xi,x)}^2\dd \xi.  
\]
\end{theorem}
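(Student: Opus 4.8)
The plan is to follow the template of the flow-matching / conditional-flow-matching gradient-equivalence theorem, now carried out for \emph{matrix} fields and the generalized continuity equation \eqref{eq:GCE}. First I would expand both objectives as quadratics in the network output, writing $\norm{u_\theta-u}^2=\norm{u_\theta}^2-2\la u_\theta,u\ra+\norm{u}^2$ and $\norm{u_\theta-\nabla_\xi\psi}^2=\norm{u_\theta}^2-2\la u_\theta,\nabla_\xi\psi\ra+\norm{\nabla_\xi\psi}^2$, where $\la\blank,\blank\ra$ denotes the Frobenius inner product on $\R^{d\times(1+k)}$. The two trailing terms $\norm{u}^2$ and $\norm{\nabla_\xi\psi}^2$ carry no $\theta$-dependence and therefore drop out after applying $\nabla_\theta$, so it suffices to show that, after integrating over $\Xi$, the pure-$u_\theta$ terms of $\mathcal{L}$ and $\mathcal{L}^\prime$ agree and that the two cross terms agree.

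The pure-$u_\theta$ terms match by marginalization: since $\mu_\xi=\Expect_{\psi\sim\psidist}[\delta_{\psi(\xi)}]$, for every integrand $f$ on $D$ one has $\Expect_{x\sim\mu_\xi}[f(x)]=\Expect_{\psi\sim\psidist}[f(\psi(\xi))]$, and applying this to $f=\norm{u_\theta(\xi,\blank)}^2$ gives equality for each $\xi$, hence after integration over $\Xi$. The cross term is the crux. The key is to identify the matrix field as the conditional expectation $u(\xi,x)=\Expect_{\psi\sim\psidist}[\nabla_\xi\psi(\xi)\mid\psi(\xi)=x]$ of the per-path Jacobian. To justify that this conditional-expectation field indeed solves \eqref{eq:GCE}, I would first verify the per-path identity that $\delta_{\psi(\xi)}$ paired with the matrix $\nabla_\xi\psi(\xi)$ satisfies \eqref{eq:GCE} in the distributional sense — a one-line chain-rule computation against test functions, which is the $\gamma$-free analogue of \cref{prop:gamma_wise_gen} — and then superpose over $\psi\sim\psidist$, using that \eqref{eq:GCE} is linear in the pair $(p_\xi,\,p_\xi u)$. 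With this identification in hand, the tower property yields $\Expect_{\psi\sim\psidist}\qty[\la A(\xi,\psi(\xi)),\nabla_\xi\psi(\xi)\ra]=\Expect_{x\sim\mu_\xi}\qty[\la A(\xi,x),u(\xi,x)\ra]$ for $A=u_\theta$, and, after differentiating, for $A=\nabla_\theta u_\theta$, which is exactly the matching of cross terms. Taking $\nabla_\theta$ and interchanging it with the $\Xi$-integral and the expectation (dominated convergence) then gives $\nabla_\theta\mathcal{L}(\theta)=\nabla_\theta\mathcal{L}^\prime(\theta)$.

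I expect the main obstacle to be the identification of the GCE field with the conditional expectation. Equation \eqref{eq:GCE} alone does \emph{not} determine $u$ uniquely, since one may add any field that is divergence-free against $p_\xi$; it is precisely the cross-term equality that singles out the conditional-expectation representative. Thus the hypothesis ``there exists a matrix field $u$ satisfying \eqref{eq:GCE}'' should be read as fixing this marginal field — the object that the rigorous appendix version constructs — rather than an arbitrary solution. The remaining difficulties are purely measure-theoretic: the existence of the densities assumed in the Notation, enough regularity of $\psi$ and decay of $p_\xi u$ toward $\partial D$ so that the integration-by-parts boundary terms vanish when passing between the weak form of \eqref{eq:GCE} and the per-path computation, and the standard domination hypotheses needed to differentiate under the integral sign.
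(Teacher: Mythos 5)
Your proposal is correct and follows essentially the same route as the paper: the paper proves the theorem via the same quadratic expansion together with \cref{lem:PMA} (per-path solutions from \cref{lem:path_makes_sol}, superposed as in \cref{cor:PMAandpath}), whose bilinear identity \eqref{eq:Riemannian_metric} is precisely your tower-property matching of the cross terms with the marginal field $\bar{v}(\xi,x)=\Expect_{\psi\sim\psidist}[\nabla_\xi\psi(\xi)\mid\psi(\xi)=x]$. Your caveat that \eqref{eq:GCE} determines $u$ only up to a $p_\xi$-divergence-free field, so the hypothesis must be read as selecting this conditional-expectation representative, is also the right reading of the paper's (similarly imprecise) statement, since \cref{lem:PMA} constructs exactly that representative.
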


This result follows from \cref{lem:PMA} in the Appendix.

\section{Training algorithm}\label{sec:EFMalg}

In this section, we leverage the \ourmethod\ theory of \cref{sec:EFMtheory} to construct an algorithm for learning $u_\theta$ in \cref{prop:gamma_wise_gen}, which can be used for conditional generation tasks as well as for style transfer.
We summarize the training algorithm in \cref{alg:train}. 

Because EFM is a direct extension of FM, our algorithm roughly follows the same line of procedures as that of FM (\cref{alg:FM_train}): (a) sampling data, (b) constructing the supervisory signal $\nabla \psi$, and (c) updating the network by averaged loss.
However, in our algorithm, the domain of $\psi$ is $I \times \Omega$ as opposed to just $I$.  
We developed our algorithm so that, when it is applied to the unconditional case, the trained model agrees with FM.
Although the general EFM, as opposed to MMOT-EFM, does not necessasrily need parametrize $Q$ with respect to joint distribution $\pi$,
in this paper we focus ton he procedure that uses the joint distribution $\pi$ and $\psi$ in the form of \eqref{eq:regression} and \eqref{eq:EFM-obj}.

      \begin{algorithm}[H]
        \caption{Algorithm of EFM}
\begin{algorithmic}[1]
\renewcommand{\algorithmicrequire}{\textbf{Input:}}
 \REQUIRE Conditions $C \subset\Omega$, set of datasets $D_c \subset D$ ($c \in C$), network $u_\theta\colon\tcset\times D \to \R^{d \times (1+k)}$, source distributions $p_0\given{\blank}{c}$ ($c\in C$)
 \renewcommand{\algorithmicensure}{\textbf{Return:}}
 \ENSURE $\theta\in\R^p$ 
  \FOR {each iteration}
   \item[] \texttt{\color{teal}\# Step 1: Sample}
  \STATE Sample $C_0$ from $C$, $B_{0,c}$ from $p_0\given{\cdot}{c}$ and $B_{1,c}$ from $D_c$ ($c \in C_0$). Put $B^0 \coloneqq \{B_{0,c}\}_{c\in C_0}$, $B^1 \coloneqq \{B_c\}_{c\in C_0}$
 \item[] \texttt{\color{teal}\# Step 2: Construct $\psi\colon I\times\Omega\to D$} 
   \STATE Construct a transport plan $\pi$ among $B^0$ and $B^1$ \texttt{\color{teal}\#\cref{sec:EFMalg}}\\
   
     \STATE Sample $(x_{t,c})_{t,c}\sim\pi$
   \STATE Define $\psi\colon\tcset\to D$ s.t. \eqref{eq:psi_tc}
  \STATE Sample $t\sim\Uniform{I}$, $c\sim\Uniform{\Conv(C_0)}$
  \STATE Compute 
\[
    \begin{aligned}
    \psi_{t,c}&\coloneqq\psi(t,c)\\
    \nabla\psi_{t,c}&\coloneqq\nabla_{t,c} \psi(t,c)
    \end{aligned}
\]
  \STATE Update $\theta$ by $ \nabla_\theta \|u_\theta(t, {c}, \psi_{t,c}) -  \nabla\psi_{t,c} \|^2$
  \ENDFOR
\end{algorithmic}
      \end{algorithm}
 
\textbf{Step 1 Sampling from Datasets}:  
Our objective begins from the sampling of $\psi$, whose jacobian serves as the supervisory signal in the objective \eqref{eq:EFM-obj}.  
In order to sample $\psi$, we construct $\psidist$ from a joint distribution $\pi$ defined over $D^{2 N_c}$ with marginals that are approximately $\{ \mu_{t, c} \}_{t \in \{0,1\},c \in C_0}$.
To this end, we begin by randomly choosing a subset $C_0\coloneqq\{c_i\}_{i=1}^{N_c}$ from $C$ so that $C_0$ consists of close points. We then sample a batch $B_{0,c}$ from $\mu_{0,c}$ and $B_{1,c}$ from $D_c$ for each $c \in C_0$.
For the reason we describe at the end of this section,  we chose $\mu_{0,c} = \operatorname{Law}(R(c) + z)$ with $z$ being a common Gaussian component, and  $R\colon \Omega \to D$ is regressed from $\{ (c_i, \operatorname{Mean}[D_{c_i}]) \}_i$ by a linear map. 
We choose this option because it theoretically aids us in reducing $\Diri(\mu)$ (See \cref{prop:Dirichlet_bound}).

\textbf{Step 2 Constructing the supervisory paths}: 
Given the samples $B = \{ B_{t, c} \}_{ t \in \{0,1\},c \in C_0}$, we sample  $\{ x_{t, c} \}_{c \in C_0, t \in \{0,1\}}$ from a joint distribution $\pi$ over $D^{2N_c}$ with support on $B$. 
In MMOT-EFM, as an internal step, we train the joint distribution $\pi$ with $c ( \boldsymbol{x}_A) =\int_\Xi \|\nabla_\xi \phi\given{\xi}{\boldsymbol{x}_A}\|^2\dd{\xi}$ with $\phi$ solved analytically for \eqref{eq:regression} (e.g. Kernel Regression, Linear regression, etc).
When possible, the regression function may be chosen to reflect the prior knowledge of the metrics on $\Omega$ by extending the philosophy of \citet{chen2023riemannian} to the space of conditions. 
In practice, however, the computational cost of MMOT scales exponentially with the number of marginals, so we optimize the joint distributions over $B_1 = \{ B_{1, c} \}_{1,c \in C_0}$ only and couple the analogous $B_0$ to $B_1$ via the usual optimal transport. 
Please see Appendix \ref{sec:appendix-MMOT-empirical} for a more detailed sampling procedure. 
Now, given a joint sample $\{ x_{t, c} \}_{c \in C_0, t \in \{0,1\}}$, we construct $\psi$ as 
\begin{align}
\psi\given{t, c}{x_{0,c},\boldsymbol{x}_{C_0}} = (1-t)x_{0,c} + t \cpsi \given{c}{\boldsymbol{x}_{C_0}}  \label{eq:psi_tc}
\end{align}
where $\cpsi \given{c}{\boldsymbol{x}_{C_0}}$ is the solution of the kernel regression problem for the map $T\colon\R^k\ni c \mapsto x_{1,c}\in\R^d$ with any choice of kernel on $\R^k$. 
Note that this construction of $\psi$ satisfies the boundary condition \eqref{eq:BC_psi} with $A = \{0,1\} \times  C_0$, and generalizes the $\psi$ used in OT-CFM. 


\textbf{Step 3 Learning the matrix fields}: 
Thanks to the result of \cref{thm:PMA}, we may train $u_\theta\colon\tcset \to \R^{d \times (1+k)}$ via the loss function being the Monte Carlo approximation of \eqref{eq:EFM_cond_obj}.


\section{Inference}
The sampling procedures for style transfer and conditional generation respectively follow \cref{ex:style_trans} and \cref{ex:cond_gen}. 
For the task of style transfer from $c_0$ to $c_\ast$, we use the flow along the path $\mu_{1,c_0} \to \mu_{1,c_\ast}$.
For the task of conditional generation with target condition $c_\ast$,  we use the flow along $\mu_{0,c_\ast} \to p_{\mu,c_\ast}$.  
See \cref{alg:gen,alg:transfer} for the pseudo-codes.
When generating a sample for $c^* \not\in C$, the source distribution $\mu_{0, c^*}$ is constructed by $R(c^*) + \Gaussian{0}{I}$ where $R$ is given as in training. 

\begin{figure}[ht]
\centering
    \begin{minipage}{\textwidth}
        \begin{algorithm}[H]
            \caption{Generation using the matrix field $u_\theta$}
            \begin{algorithmic}
               \renewcommand{\algorithmicrequire}{\textbf{Input:}}
               \REQUIRE Trained $u_\theta$,  source distribution $p_{0,0}$, target condition $c_\ast$, 
               \renewcommand{\algorithmicensure}{\textbf{Return:}}
               \ENSURE A sample $x_1$ from $p\given{\cdot}{c_\ast}$
               \STATE Sample $z$ from source distribution $p_{0,0}$
               \STATE Solve the regression problem $R\colon c \mapsto \operatorname{Mean}[{D}_c]$ on $C$
               \STATE Set $x_{0,c} = z + R(c)$
               \STATE Return $\ODEsolve\qty(x_{0,c}, u_\theta(\blank, c, \blank) \sbmqty{1\\0_k})$
            \end{algorithmic}
            \label{alg:gen}
        \end{algorithm}
    \end{minipage} 
    \begin{minipage}{\textwidth}
        \begin{algorithm}[H]
            \caption{Transfer using the matrix field $u_\theta$}
            \begin{algorithmic}
               \renewcommand{\algorithmicrequire}{\textbf{Input:}}
               \REQUIRE Trained Network $u_\theta$,  source sample $x_0\sim p_{1,c_1}$ with condition label $c_1$, target condition $c_2$
               \renewcommand{\algorithmicensure}{\textbf{Return:}}
               \ENSURE A sample $x_2$ from $p\given{\cdot}{c_2}$ 
               \STATE Return $\ODEsolve(x_0, u_\theta(1, \gamma^{c_1 \to c_2}(\blank), \blank) \sbmqty{0\\c_2-c_1})$
               \begin{flushright}
                  \texttt{\color{teal}\# $\gamma^{c_1\to c_2}$ is defined in \cref{ex:style_trans}}
               \end{flushright}
            \end{algorithmic}
            \label{alg:transfer}
        \end{algorithm}
    \end{minipage}
\end{figure}

\section{Related Works} \label{sec:related} 

Since the debut of \citet{lipman2023flow}, several studies have explored ways formalize the application of flow-based models to conditional generation tasks. 
Some works \citep{dao2023flow, zheng2023guided} take the approach of parametrizing the vector field $v$ with the conditional value $c$ together with the so-called guidance scale $\omega\in\R$ in the form of $v(t,c,x) =  \omega v_t\given{x}{\varnothing} + (1- \omega)   v_t\given{x}{c},
$
which is inspired by the classifier-free guidance scheme of \citet{ho2022classifier}.
\citet{zheng2023guided} in particular has shown that if
$v_t\given{x}{c}$ in this expression well-approximates the conditional score $\nabla \log p\given{x}{c}$, then with the appropriate choice of $\omega$, $v_t(x, c)$ does correspond to the sequence of probability distributions beginning from the standard Gaussian distribution and ending at the target distribution.  
The success of this scheme hinges on the quality of the approximation of the conditional score, and it is reported \citep{lipman2023flow} that in image applications, a guidance scale with a range from $1.2$ to $1.3$ yields competitive performance in terms of FID.
Meanwhile, \citep{hu2023latent} takes the approach of creating a guidance vector by the average of $v_t(x_{c_{\mathrm{targets}}}) - v_t(x_{c_{\mathrm{others}}})$. 
Like the naive application of OT-CFM to conditional generation that simply concatenate the conditional value to the input of the network modeling the vector field, however, these approaches does not allow the user to control the continuity of generated $\mu_c$ with respect to $c$, except through the blackbox architecture of the network modeling $v$. 

Unlike these approaches, \ourmethod\ constructs the flow of generation for an arbitrary condition $c \in \Omega$ through the matrix field $u\colon \tcset\times D \to \R^{d \times (1+k) }$ which solves GCE, or the system of continuity equations defined over $\tcset$, and one can introduce an inductive bias to the continuity of $\mu_c$ with respect to $c$ through the design of the distribution $Q$ of $\psi$ used in the objective function.  
The Dirichlet energy that we use in the demonstration of \ourmethod  is akin to the control of Lipschitz constant for $\psi$ and hence $\mu$, except that it also comes with the boundary condition to assure the generation of the conditional distributions used at the time of the training. 
Also, when $u$ is trained with the random conditional paths with appropriate boundary conditions, our EFM theory in \cref{sec:EFMtheory} guarantees that the flow $\phi^{\gamma^c}$ in \cref{ex:cond_gen} transforms the source distribution to the target conditional distribution whenever $c$ is a condition used in the training. 
Also related to \citet{chen2023riemannian}, which uses a riemannian geodesic to model the distribution of $\psi: I \to M$, where $M$ is the manifold.  


We shall also mention the family of methods based on the Shr\"{o}dinger bridge, which also aims to interpolate between an arbitrary pair of distribution
\citep{tong2023simulation}. 
This direction can be regarded as the problem of solving the continuity equation while minimizing the regularized energy of user's choice \citep{koshizuka2022neural} in the generation process. 
\citep{kim2023wasserstein} also uses Wasserstein Barycenter for distributional interpolation. Stochastic interpolants \citep{albergo2023stochastic} learns a model that is similar to generalized geodesics \cref{sec:ggc-EFM-sample} and it aims to optimize the path in the space of conditions with respect to kinetic energy. 
This approach, however, neither simultaneously models the generation process along $I$ nor formulates the energy with respect to $\mu\colon\Omega \to \Prob{D}$ itself.    

\section{Experiments}\label{sec:exp}
We conducted the following experiments to investigate \ourmethod\ in applications. 
\subsection{Synthetic 2D point clouds}
\label{subsec:synthetic}

We first demonstrate the performance of our method on a conditional distribution consisting of synthetic point clouds in a two-dimensional domain $D\subset\R^2$. Here, we consider the case where the space $\Omega$ of the condition is square, i.e., $\Omega=[0,1]^2$, and train the model when only samples from the conditional distributions $p\given{\cdot}{c}$ at the four corner points $c$ of the square $\Omega$ can be observed (Fig \cref{fig:gt_2dsyn}.)

\begin{figure}[h]
    \centering
        \includegraphics[scale=.5]{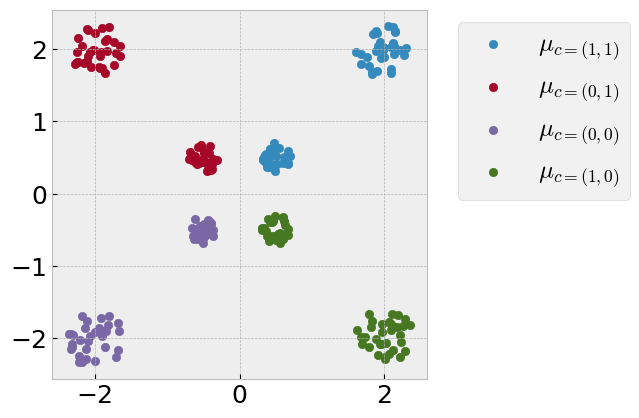}
        \label{fig:gt_2dsyn}
    \caption{Training distributions for the 2D synthetic experiments (4 conditions, two clusters each) }
\end{figure}

We compared our method against COT-FM \citep{chemseddine2024conditional,kerrigan2024dynamic}, as well as OT-CFM and the EFM with the plan$\pi$ which is constructed in the way of generalized geodesic, see \cref{sec:ggc-EFM-sample}.
See Fig \ref{fig:2dsyn} for the generation and transfer visualizations, and see Fig\ref{fig:W1error} for the error between GT and predicted distributions. Note that our method, MMOT-EFM, performs competitively with all its rivals in interpolation and generation tasks. 
Also note that the style transfer with MMOT-EFM preserves the structure of the inner and outer clusters, just as mentioned in the introduction.

\begin{figure}[htb]
    \centering
    \includegraphics[width=0.5\textwidth]{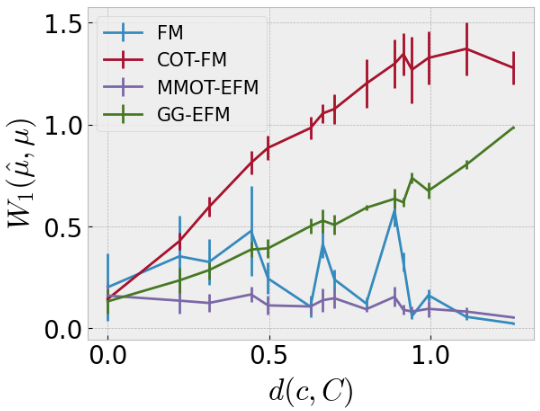}
      \caption{Wasserstein distance between GT vs predicted distributions. COT-FM was evaluated with $\beta=5$.}
    \label{fig:W1error}
\end{figure}

\begin{figure}[htbp]
  \begin{center}
    \begin{subfigure}{\textwidth}
      \centering
      \includegraphics[width=0.75\textwidth]{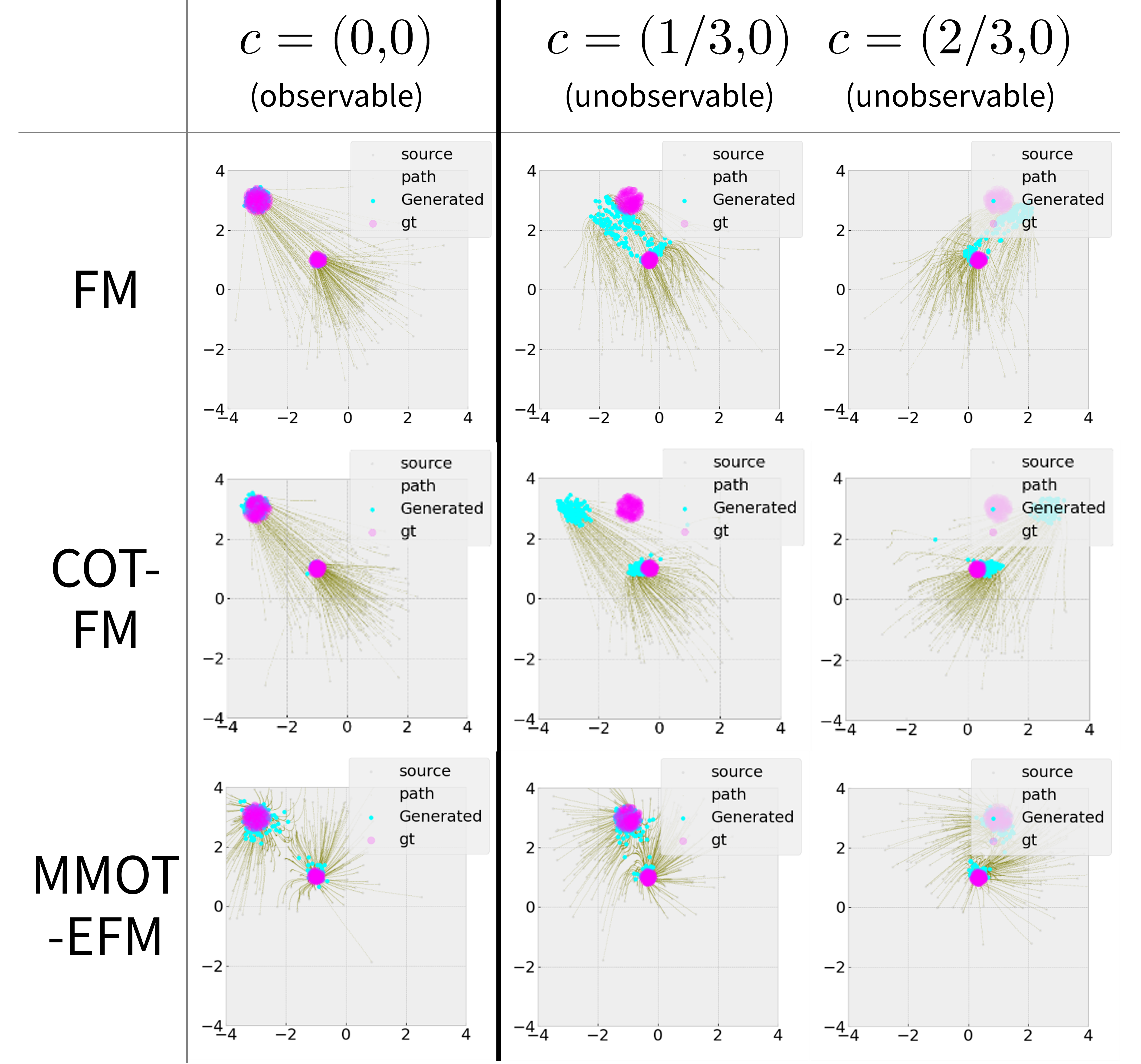}
      \caption{Generated points}
    \end{subfigure}
    \\
    \begin{subfigure}{\textwidth}
      \centering
      \includegraphics[width=0.5\textwidth]{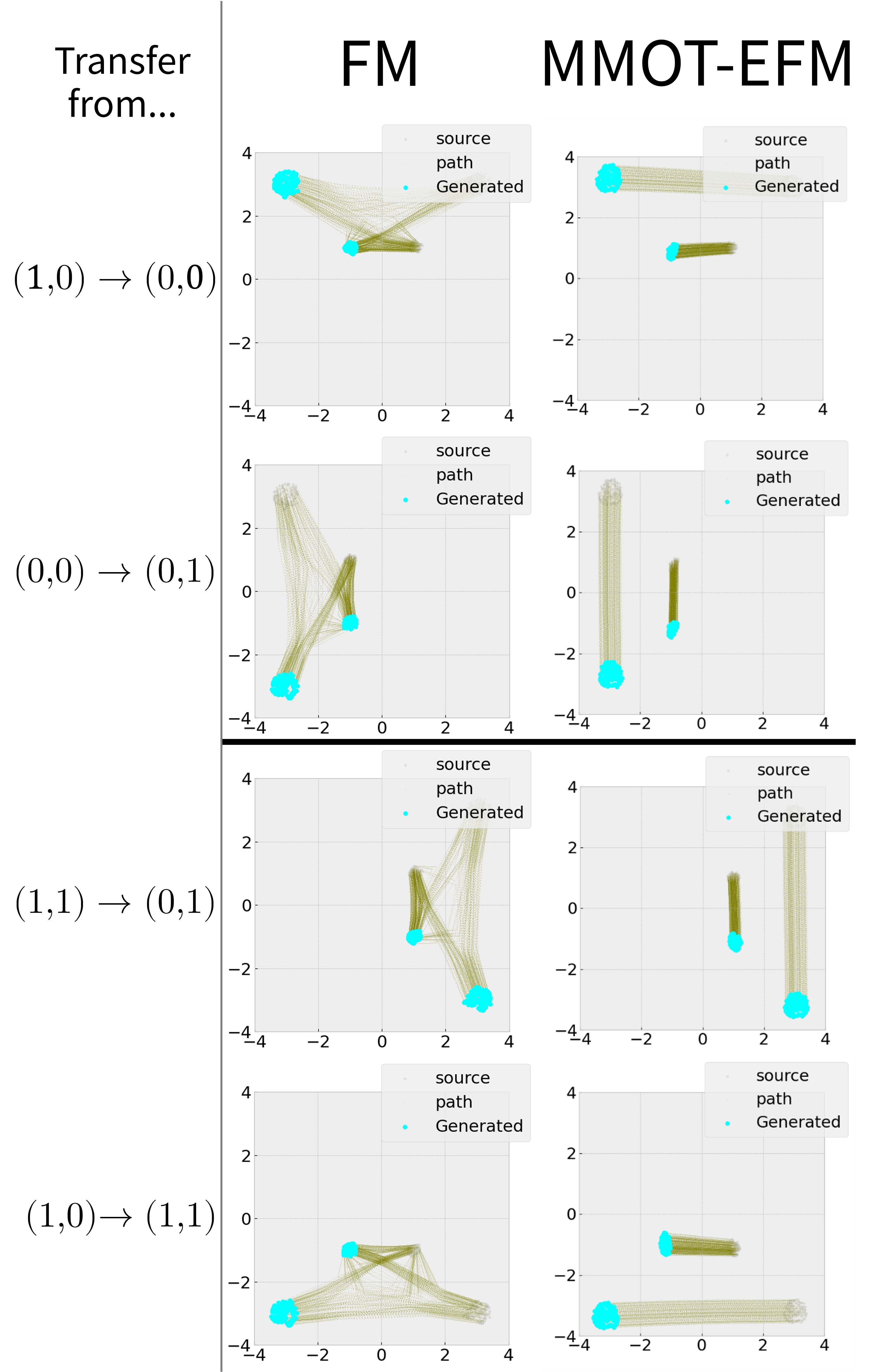}
      \caption{Transfer}
    \end{subfigure}
  \end{center}
  \caption{Conditional Generation results by various models.
  Predicted and GT distributions are colored blue and red, respectively, and the generation/transfer trajectories are drawn in yellow.
  }
  \label{fig:2dsyn}
\end{figure}

\subsection{Conditional molecular generation}
In molecular design applications, multiple chemical properties are often required to be considered simultaneously, and most traditional molecular design methods combine all property requirements and their constraints in a single objective function. We applied MMOT-EFM to the conditional generation for two simultaneous properties of (1) the number of rotatable bonds and (2) the number of Hydrogen Bond Acceptors(HBA). We describe more details on the experiment setup in \cref{sec:exp_detail_cond_mol_gen}. We first trained a VAE model to encode molecular structure into a $32$-dimensional latent space and then trained EFM to perform conditional generation over latent space. 
As shown in \ref{tab:Cond_mol_gen}, our method outperforms all baseline methods on the averaged MAE for both interpolation and extrapolation cases.
\begin{table}[htbp]
\centering
\caption{MMOT-EFM vs. baselines in conditional molecular generations.
}
\label{tab:Cond_mol_gen}
\begin{tabular}{@{}lcc@{}}
\toprule
 & Interpolation MAE  &  Extrapolation MAE \\ \midrule
FM & $1.081 \pm 0.167$ & $1.647 \pm 0.281$\\
COT-FM & 1.023 $\pm$ 0.179 &  1.453 $\pm$ 0.284 \\ \midrule
\textbf{MMOT-EFM(ours)} & \textbf{0.974 $\pm$ 0.137}  &  \textbf{1.344 $\pm$ 0.197}  \\
\bottomrule
\end{tabular}
\end{table}
\section{Conclusion}
In this paper, we developed the theory of EFM, a direct extension of FM that learns the transformation of distributions along the conditional direction as well as along the direction of generation through the modeling of a matrix field instead of a vector field.
EFM models how the distribution changes with respect to conditions in a more explicit form.
We provide the mathematical theory of EFM together with MMOT-EFM, an extension of OT-CFM, with the aim of minimizing the average generation sensitivity with respect to continuous conditions and demonstrating its competitiveness. 
However, we shall note that our current algorithm is limited by the computational cost of MMOT, which grows exponentially with the number of conditional distributions to be used at each step of the algorithm ($\abs{C_0}$).
An advance in the efficient MMOT method or its alternative may significantly improve the scope of applications of EFM. 
The EFM theory is complementary to many powerful existing ideas, particularly through the design of $\psi$ and $Q$, into which one may incorporate the structure of the space of conditions. 
Application to more complex datasets and incorporation of prior knowledge regarding the structure of $\Omega$ is an important future work.  Finally, we note that our theory pertains to the generation of conditional distributions of unseen conditions and interpolation of distributions. 
We shall be aware that, without strong prior knowledge, the identification of unseen distribution 
is an ill-defined problem, and its solution also depends on the architectures and heuristics used therein, as well as the dataset used in the  training.

\bibliography{references.bib}

\begin{thebibliography}{39}
\providecommand{\natexlab}[1]{#1}
\providecommand{\url}[1]{\texttt{#1}}
\expandafter\ifx\csname urlstyle\endcsname\relax
  \providecommand{\doi}[1]{doi: #1}\else
  \providecommand{\doi}{doi: \begingroup \urlstyle{rm}\Url}\fi

\bibitem[Akhmetshin et~al.(2021)Akhmetshin, Lin, Mazitov, Ziaikin, Madzhidov, and Varnek]{Akhmetshin2021}
Tagir Akhmetshin, Arkadii~I. Lin, Daniyar Mazitov, Evgenii Ziaikin, Timur Madzhidov, and Alexandre Varnek.
\newblock {ZINC 250K data sets}.
\newblock 12 2021.
\newblock \doi{10.6084/m9.figshare.17122427.v1}.
\newblock URL \url{https://figshare.com/articles/dataset/ZINC_250K_data_sets/17122427}.

\bibitem[Albergo et~al.(2023)Albergo, Goldstein, Boffi, Ranganath, and Vanden-Eijnden]{albergo2023stochastic}
Michael~S Albergo, Mark Goldstein, Nicholas~M Boffi, Rajesh Ranganath, and Eric Vanden-Eijnden.
\newblock Stochastic interpolants with data-dependent couplings.
\newblock \emph{arXiv preprint arXiv:2310.03725}, 2023.

\bibitem[Ambrosio et~al.(2008)Ambrosio, Gigli, and Savar{\'e}]{AGS}
Luigi Ambrosio, Nicola Gigli, and Giuseppe Savar{\'e}.
\newblock \emph{Gradient flows in metric spaces and in the space of probability measures}.
\newblock Lectures in Mathematics ETH Z\"urich. Birkh\"auser Verlag, Basel, 2 edition, 2008.

\bibitem[Bose et~al.(2023)Bose, Akhound-Sadegh, Fatras, Huguet, Rector-Brooks, Liu, Nica, Korablyov, Bronstein, and Tong]{bose2023se}
Avishek~Joey Bose, Tara Akhound-Sadegh, Kilian Fatras, Guillaume Huguet, Jarrid Rector-Brooks, Cheng-Hao Liu, Andrei~Cristian Nica, Maksym Korablyov, Michael Bronstein, and Alexander Tong.
\newblock {SE(3)}-stochastic flow matching for protein backbone generation.
\newblock \emph{arXiv preprint arXiv:2310.02391}, 2023.

\bibitem[Brenier(2003)]{Brenier2003}
Yann Brenier.
\newblock \emph{Extended Monge--Kantorovich Theory}, pages 91--121.
\newblock Springer Berlin Heidelberg, Berlin, Heidelberg, 2003.
\newblock ISBN 978-3-540-44857-0.
\newblock \doi{10.1007/978-3-540-44857-0_4}.
\newblock URL \url{https://doi.org/10.1007/978-3-540-44857-0_4}.

\bibitem[Chemseddine et~al.(2024)Chemseddine, Hagemann, Wald, and Steidl]{chemseddine2024conditional}
Jannis Chemseddine, Paul Hagemann, Christian Wald, and Gabriele Steidl.
\newblock Conditional wasserstein distances with applications in bayesian ot flow matching, 2024.

\bibitem[Chen and Lipman(2023)]{chen2023riemannian}
Ricky~TQ Chen and Yaron Lipman.
\newblock Riemannian flow matching on general geometries.
\newblock \emph{arXiv preprint arXiv:2302.03660}, 2023.

\bibitem[Dao et~al.(2023)Dao, Phung, Nguyen, and Tran]{dao2023flow}
Quan Dao, Hao Phung, Binh Nguyen, and Anh Tran.
\newblock Flow matching in latent space.
\newblock \emph{arXiv preprint arXiv:2307.08698}, 2023.

\bibitem[Davtyan et~al.(2023)Davtyan, Sameni, and Favaro]{davtyan2023efficient}
Aram Davtyan, Sepehr Sameni, and Paolo Favaro.
\newblock Efficient video prediction via sparsely conditioned flow matching.
\newblock In \emph{Proceedings of the IEEE/CVF International Conference on Computer Vision}, pages 23263--23274, 2023.

\bibitem[Ding et~al.(2021)Ding, Wang, Xu, Welch, and Wang]{ding2021ccgan}
Xin Ding, Yongwei Wang, Zuheng Xu, William~J Welch, and Z.~Jane Wang.
\newblock Cc{GAN}: Continuous conditional generative adversarial networks for image generation.
\newblock In \emph{International Conference on Learning Representations}, 2021.
\newblock URL \url{https://openreview.net/forum?id=PrzjugOsDeE}.

\bibitem[Durrett(2019)]{Durrett}
Rick Durrett.
\newblock \emph{Probability: Theory and Examples}.
\newblock Thomson, 2019.

\bibitem[Fan and Alvarez-Melis(2023)]{fan2023generating}
Jiaojiao Fan and David Alvarez-Melis.
\newblock Generating synthetic datasets by interpolating along generalized geodesics.
\newblock In \emph{Uncertainty in Artificial Intelligence}, pages 571--581. PMLR, 2023.

\bibitem[Flamary et~al.(2021)Flamary, Courty, Gramfort, Alaya, Boisbunon, Chambon, Chapel, Corenflos, Fatras, Fournier, Gautheron, Gayraud, Janati, Rakotomamonjy, Redko, Rolet, Schutz, Seguy, Sutherland, Tavenard, Tong, and Vayer]{flamary2021pot}
R{\'e}mi Flamary, Nicolas Courty, Alexandre Gramfort, Mokhtar~Z. Alaya, Aur{\'e}lie Boisbunon, Stanislas Chambon, Laetitia Chapel, Adrien Corenflos, Kilian Fatras, Nemo Fournier, L{\'e}o Gautheron, Nathalie~T.H. Gayraud, Hicham Janati, Alain Rakotomamonjy, Ievgen Redko, Antoine Rolet, Antony Schutz, Vivien Seguy, Danica~J. Sutherland, Romain Tavenard, Alexander Tong, and Titouan Vayer.
\newblock Pot: Python optimal transport.
\newblock \emph{Journal of Machine Learning Research}, 22\penalty0 (78):\penalty0 1--8, 2021.
\newblock URL \url{http://jmlr.org/papers/v22/20-451.html}.

\bibitem[Gebhard et~al.(2023)Gebhard, Wildberger, Dax, Angerhausen, Quanz, and Sch{\"o}lkopf]{gebhard2023inferring}
Timothy~D Gebhard, Jonas Wildberger, Maximilian Dax, Daniel Angerhausen, Sascha~P Quanz, and Bernhard Sch{\"o}lkopf.
\newblock Inferring atmospheric properties of exoplanets with flow matching and neural importance sampling.
\newblock \emph{arXiv preprint arXiv:2312.08295}, 2023.

\bibitem[Goodfellow et~al.(2020)Goodfellow, Pouget-Abadie, Mirza, Xu, Warde-Farley, Ozair, Courville, and Bengio]{goodfellow2020generative}
Ian Goodfellow, Jean Pouget-Abadie, Mehdi Mirza, Bing Xu, David Warde-Farley, Sherjil Ozair, Aaron Courville, and Yoshua Bengio.
\newblock Generative adversarial networks.
\newblock \emph{Communications of the ACM}, 63\penalty0 (11):\penalty0 139--144, 2020.

\bibitem[Ho and Salimans(2022)]{ho2022classifier}
Jonathan Ho and Tim Salimans.
\newblock Classifier-free diffusion guidance.
\newblock \emph{arXiv preprint arXiv:2207.12598}, 2022.

\bibitem[Ho et~al.(2020)Ho, Jain, and Abbeel]{ho2020denoising}
Jonathan Ho, Ajay Jain, and Pieter Abbeel.
\newblock Denoising diffusion probabilistic models.
\newblock \emph{Advances in neural information processing systems}, 33:\penalty0 6840--6851, 2020.

\bibitem[Hu et~al.(2023)Hu, Zhang, Tang, Mettes, Zhao, and Snoek]{hu2023latent}
Vincent~Tao Hu, David~W Zhang, Meng Tang, Pascal Mettes, Deli Zhao, and Cees G.~M. Snoek.
\newblock Latent space editing in transformer-based flow matching.
\newblock In \emph{ICML Workshop on New Frontiers in Learning, Control, and Dynamical Systems}, 2023.
\newblock URL \url{https://openreview.net/forum?id=Bi6E5rPtBa}.

\bibitem[Ishitani et~al.(2022)Ishitani, Kataoka, and Rikimaru]{ishitani2022molecular}
Ryuichiro Ishitani, Toshiki Kataoka, and Kentaro Rikimaru.
\newblock Molecular design method using a reversible tree representation of chemical compounds and deep reinforcement learning.
\newblock \emph{Journal of Chemical Information and Modeling}, 62\penalty0 (17):\penalty0 4032--4048, 2022.

\bibitem[Isobe(2023)]{isobe2023convergence}
Noboru Isobe.
\newblock A convergence result of a continuous model of deep learning via {Ł}ojasiewicz--{S}imon inequality.
\newblock \emph{arXiv preprint arXiv:2311.15365}, 2023.

\bibitem[Jin et~al.(2018)Jin, Barzilay, and Jaakkola]{jin2018junction}
Wengong Jin, Regina Barzilay, and Tommi Jaakkola.
\newblock Junction tree variational autoencoder for molecular graph generation.
\newblock In \emph{International conference on machine learning}, pages 2323--2332. PMLR, 2018.

\bibitem[Kerrigan et~al.(2023)Kerrigan, Migliorini, and Smyth]{kerrigan2023functional}
Gavin Kerrigan, Giosue Migliorini, and Padhraic Smyth.
\newblock Functional flow matching.
\newblock \emph{arXiv preprint arXiv:2305.17209}, 2023.

\bibitem[Kerrigan et~al.(2024)Kerrigan, Migliorini, and Smyth]{kerrigan2024dynamic}
Gavin Kerrigan, Giosue Migliorini, and Padhraic Smyth.
\newblock Dynamic conditional optimal transport through simulation-free flows, 2024.

\bibitem[Kim et~al.(2023)Kim, Lee, Choi, Won, and Paik]{kim2023wasserstein}
Young-geun Kim, Kyungbok Lee, Youngwon Choi, Joong-Ho Won, and Myunghee~Cho Paik.
\newblock Wasserstein geodesic generator for conditional distributions.
\newblock \emph{arXiv preprint arXiv:2308.10145}, 2023.

\bibitem[Kingma and Welling(2013)]{kingma2013auto}
Diederik~P Kingma and Max Welling.
\newblock Auto-encoding variational bayes.
\newblock \emph{arXiv preprint arXiv:1312.6114}, 2013.

\bibitem[Klein et~al.(2023)Klein, Kr{\"a}mer, and No{\'e}]{klein2023equivariant}
Leon Klein, Andreas Kr{\"a}mer, and Frank No{\'e}.
\newblock Equivariant flow matching.
\newblock \emph{arXiv preprint arXiv:2306.15030}, 2023.

\bibitem[Koshizuka and Sato(2022)]{koshizuka2022neural}
Takeshi Koshizuka and Issei Sato.
\newblock Neural {L}agrangian {S}chr\"{o}dinger bridge: Diffusion modeling for population dynamics.
\newblock In \emph{The Eleventh International Conference on Learning Representations}, 2022.

\bibitem[Lavenant(2019)]{LAVENANT2019688}
Hugo Lavenant.
\newblock Harmonic mappings valued in the wasserstein space.
\newblock \emph{Journal of Functional Analysis}, 277\penalty0 (3):\penalty0 688--785, 2019.
\newblock ISSN 0022-1236.
\newblock \doi{https://doi.org/10.1016/j.jfa.2019.05.003}.
\newblock URL \url{https://www.sciencedirect.com/science/article/pii/S0022123619301478}.

\bibitem[Lin et~al.(2022)Lin, Ho, Cuturi, and Jordan]{JMLR:v23:19-843}
Tianyi Lin, Nhat Ho, Marco Cuturi, and Michael~I. Jordan.
\newblock On the complexity of approximating multimarginal optimal transport.
\newblock \emph{Journal of Machine Learning Research}, 23\penalty0 (65):\penalty0 1--43, 2022.
\newblock URL \url{http://jmlr.org/papers/v23/19-843.html}.

\bibitem[Lipman et~al.(2023)Lipman, Chen, Ben-Hamu, Nickel, and Le]{lipman2023flow}
Yaron Lipman, Ricky T.~Q. Chen, Heli Ben-Hamu, Maximilian Nickel, and Matthew Le.
\newblock Flow matching for generative modeling.
\newblock In \emph{The Eleventh International Conference on Learning Representations}, 2023.
\newblock URL \url{https://openreview.net/forum?id=PqvMRDCJT9t}.

\bibitem[Moosm{\"u}ller and Cloninger(2020)]{moosmuller2020linear}
Caroline Moosm{\"u}ller and Alexander Cloninger.
\newblock Linear optimal transport embedding: Provable wasserstein classification for certain rigid transformations and perturbations.
\newblock \emph{arXiv preprint arXiv:2008.09165}, 2020.

\bibitem[Piran et~al.(2024)Piran, Klein, Thornton, and Cuturi]{MMOT2024Cuturi}
Zoe Piran, Michal Klein, James Thornton, and Marco Cuturi.
\newblock Contrasting multiple representations with the multi-marginal matching gap.
\newblock In \emph{International conference on machine learning}, 2024.

\bibitem[Reshetnyak(1997)]{Reshetnyak1997}
Yu.~G. Reshetnyak.
\newblock Sobolev-type classes of functions with values in a metric space.
\newblock \emph{Siberian Mathematical Journal}, 38\penalty0 (3):\penalty0 567--583, May 1997.
\newblock ISSN 1573-9260.
\newblock \doi{10.1007/BF02683844}.
\newblock URL \url{https://doi.org/10.1007/BF02683844}.

\bibitem[Sohl-Dickstein et~al.(2015)Sohl-Dickstein, Weiss, Maheswaranathan, and Ganguli]{sohl2015deep}
Jascha Sohl-Dickstein, Eric Weiss, Niru Maheswaranathan, and Surya Ganguli.
\newblock Deep unsupervised learning using nonequilibrium thermodynamics.
\newblock In \emph{International conference on machine learning}, pages 2256--2265. PMLR, 2015.

\bibitem[Song et~al.(2020)Song, Meng, and Ermon]{song2020denoising}
Jiaming Song, Chenlin Meng, and Stefano Ermon.
\newblock Denoising diffusion implicit models.
\newblock \emph{arXiv preprint arXiv:2010.02502}, 2020.

\bibitem[Tong et~al.(2023{\natexlab{a}})Tong, Malkin, Fatras, Atanackovic, Zhang, Huguet, Wolf, and Bengio]{tong2023simulation}
Alexander Tong, Nikolay Malkin, Kilian Fatras, Lazar Atanackovic, Yanlei Zhang, Guillaume Huguet, Guy Wolf, and Yoshua Bengio.
\newblock Simulation-free schr{\"o}dinger bridges via score and flow matching.
\newblock \emph{arXiv preprint 2307.03672}, 2023{\natexlab{a}}.

\bibitem[Tong et~al.(2023{\natexlab{b}})Tong, Malkin, Huguet, Zhang, {Rector-Brooks}, Fatras, Wolf, and Bengio]{tong2023improving}
Alexander Tong, Nikolay Malkin, Guillaume Huguet, Yanlei Zhang, Jarrid {Rector-Brooks}, Kilian Fatras, Guy Wolf, and Yoshua Bengio.
\newblock Improving and generalizing flow-based generative models with minibatch optimal transport.
\newblock \emph{arXiv preprint 2302.00482}, 2023{\natexlab{b}}.

\bibitem[Yim et~al.(2024)Yim, Campbell, Mathieu, Foong, Gastegger, Jim{\'e}nez-Luna, Lewis, Satorras, Veeling, No{\'e}, et~al.]{yim2024improved}
Jason Yim, Andrew Campbell, Emile Mathieu, Andrew~YK Foong, Michael Gastegger, Jos{\'e} Jim{\'e}nez-Luna, Sarah Lewis, Victor~Garcia Satorras, Bastiaan~S Veeling, Frank No{\'e}, et~al.
\newblock Improved motif-scaffolding with {SE(3)} flow matching.
\newblock \emph{arXiv preprint arXiv:2401.04082}, 2024.

\bibitem[Zheng et~al.(2023)Zheng, Le, Shaul, Lipman, Grover, and Chen]{zheng2023guided}
Qinqing Zheng, Matt Le, Neta Shaul, Yaron Lipman, Aditya Grover, and Ricky~TQ Chen.
\newblock Guided flows for generative modeling and decision making.
\newblock \emph{arXiv preprint arXiv:2311.13443}, 2023.

\end{thebibliography}
\bibliographystyle{plainnat}

\newpage
\appendix
\onecolumn


\section{Mathematical description of Extended Flow Matching Theory}
Our aim is to sample from the unknown conditional distribution $\Omega\ni c\mapsto p(\bullet\mid c)\in \Prob{D}$.
We extend the flow matching technique developed in \citep{lipman2023flow} for this aim. 
The technique evolves unconditional probability distributions $\mu_t\in\Prob{D}$, $t\in\qty[0,1]$ from a source distribution $\mu_0$ (such as Gaussian $\Gaussian$) to a target distribution $\mu_1\approx p^{\mathrm{data}}$ by means of a continuity equation.
We then introduce a generalized continuity equation that evolves conditional distributions $\mu_{t,c}$, $t\in\qty[0,1]$, $c\in\Omega$ from source distributions $\mu_0$ to the target distributions $\mu_{t=1,c}\approx p^{\mathrm{data}}(\bullet\mid c)$.

To realize this evolution, this section gives an example of how to construct a (at least approximate) solution of the generalized continuity equation and a design of the source distributions $\mu_{t=0,c}$, $c\in\Omega$.
\subsection{Notations} 
\begin{itemize}
    \item $\la\bullet,\bullet\ra$ is the standard inner product and $\abs{\bullet}\coloneqq\sqrt{\la\bullet,\bullet\ra}$.
    \item $D\ni x=(x^1,\dots,x^q)$; data space
    \item $t\in\qty[0,1]$; generation time
    \item $c\in\Omega\subset\R^p$; conditions in a bounded domain $\Omega$.
    \item $\xi=(\xi^0,\xi^1,\dots,\xi^p)\coloneqq(t,c)\in\Omegatilde\coloneqq\qty[0,1]\times\Omega$.
    \item $x\in D\subset\R^q$; data in a compact subset $D$
    \item For $\varphi\in C^1(\Omegatilde\times D;\R^{p+1})$, write $\Div_\xi\varphi\coloneqq\sum_{i=0}^p\partial_{\xi^i}\varphi^{i}$ and 
    \[
    \nabla_x\varphi\coloneqq
    \begin{pmatrix}
        \partial_{x^1}\varphi^0 & \dots & \partial_{x^1}\varphi^p\\
        \vdots & \ddots & \vdots\\
        \partial_{x^q}\varphi^0 & \dots &\partial_{x^q}\varphi^p
    \end{pmatrix}
    \in\R^{q\times(p+1)}.
    \]
    \item $\Prob{X}$; the space of Borel probability measures on a space $X$, endowed with the narrow topology
    \item $\mathcal{P}_2(X)$; the $L^2$-Wasserstein space
    \item $\delta_x\in\mathcal{P}_2(X)$; the delta measure supported at $x\in X$
    \item $\mu_\bullet\colon\Omegatilde\ni\xi\mapsto\mu_\xi\in\Prob{D}$ conditional probability distribution
    \item $L^2(\Omega;X)$; the Lebesgue space valued in a metric space $X$, see \citep[Definition 3.1]{LAVENANT2019688}
    \item $H^1(\Omega;X)$; the Sobolev space valued in a metric space $X$, see \citep[Definition 3.18]{LAVENANT2019688}. In particular, we set $\Gamma\coloneqq H^1(\Omegatilde;D)$   
    \item $\Diri(\mu)$ is the Dirichlet energy of $\mu\in L^2(\Omega;\Prob{D})$, see \citep[Definition 3.5]{LAVENANT2019688}.
    \item $\Uniform(S)$ is the uniform distribution on a subset $S$ of a Euclidean space with unit mass.
    \item $Q\in\Prob{\Psi}$. We will denote by $\psi$ the sample from a probability distribution $Q$.
    \item $\sigma(X)$ denotes the $\sigma$-algebra of a random variable 
\end{itemize}
Following the notation in \citep{Durrett}, we also use the notation $x \sim p$ to designate that $x$ is sampled from the distribution $p$.

\subsection{Generalized continuity equation}
According to \citep[Definition 3.4]{LAVENANT2019688}, we introduce a distributional solution of a generalized continuity equation formally given as 
\begin{equation}\label{eq:GCE2}
\nabla_\xi\mu(\xi,x)+\Div_x(\mu(\xi,x)v(\xi,x))=0.
\end{equation}
The rigorous sense of \eqref{eq:GCE2} is stated in the following.
\begin{definition}[A distributional solution of the generalized continuity equation]\label{def:GCE}
    A pair $(\mu,v)$ of a Borel mapping $\mu\colon\Omegatilde\to\Prob{D}$ valued in probability measures and a Borel matrix field $v\colon\Omegatilde\times D\to\R^{q\times(p+1)}$ is a \emph{solution of the continuity equation} if it holds that
    \[
    \int_{\Omegatilde}\int_{\R^q}\abs{v(\xi,x)}^2\dd{\mu_\xi}(x)\dd{\xi}<+\infty,
    \]
    and
    \[
    \int_{\Omegatilde}\int_{\R^q}\qty(\Div_\xi\varphi(\xi,x)+\la\nabla_x\varphi(\xi,x),v(\xi,x)\ra)\dd{\mu_\xi(x)}\dd{\xi}=0,
    \]
    for all $\varphi\in C_c^\infty(\Omegatilde\times\R^q;\R^{p+1})$.
\end{definition}
If a solution $(\mu,v)$ of the continuity equation is smooth, a path $\gamma$ on $\Omegatilde$ induces a path on $\Prob{D}$:
\begin{proposition}{Lifting conditional paths to probability paths}{induced_flow}
    Let $(\mu,v)$ be a solution of the continuity equation and $\gamma\colon\qty[0,1]\ni s\mapsto\gamma(s)\in\Omegatilde$ be a continuously differentiable curve in $\Omegatilde$.
    Set $\mu^\gamma\coloneqq\mu_{\gamma(\bullet)}\colon\qty[0,1]\to\Prob{D}$ and $v^\gamma(s,x)\coloneqq v(\gamma(s),x)\dot{\gamma}(s)\in\R^q$ for $(s,x)\in\qty[0,1]\times \R^q$.
    
    Suppose that $\Diri(\mu)<+\infty$ and there exists a probability density $\rho\in C^\infty(\Omegatilde;L^\infty(D))$ of $\mu$ with respect to the Lebesgue measure.  
    
    Then, $(\mu^\gamma,v^\gamma)$ satisfies the continuity equation in the sense of distributions, i.e., 
    \[
    \int_0^1\int_{\R^q}\qty(\partial_s\zeta(s,x)+\la\nabla_x\zeta(s,x),v^\gamma(s,x)\ra)\dd{\mu^\gamma_s(x)}\dd{s}=0,
    \]
    for all $\zeta\in C_c^\infty(\qty[0,1]\times\R^q)$.
\end{proposition}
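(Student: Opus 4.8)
The plan is to use the smoothness of the solution to convert the single distributional equation of \cref{def:GCE} into a classical system of $p+1$ scalar continuity equations, one per coordinate direction of $\xi$, and then to recombine them along the curve $\gamma$ by the chain rule. Write $v^i(\xi,x)\in\R^q$ for the $i$-th column of the matrix $v(\xi,x)$ ($i\in[0:p]$), so that $v^\gamma(s,x)=\sum_{i=0}^p\dot\gamma^i(s)\,v^i(\gamma(s),x)$ and, by the chain rule, $\partial_s\rho(\gamma(s),x)=\sum_{i=0}^p\dot\gamma^i(s)\,\partial_{\xi^i}\rho(\gamma(s),x)$. These two identities are the bridge between the $\xi$-system and the $s$-equation.

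First I would specialize the test field in \cref{def:GCE} to $\varphi=e_i\,\eta$, with $\eta\in C_c^\infty(\mathrm{int}\,\Omegatilde\times\R^q)$ scalar and $e_i$ the $i$-th standard basis vector of $\R^{p+1}$. Then $\Div_\xi\varphi=\partial_{\xi^i}\eta$ and $\la\nabla_x\varphi,v\ra=\la\nabla_x\eta,v^i\ra$, so the defining identity reads $\int_{\Omegatilde}\int_{\R^q}(\partial_{\xi^i}\eta+\la\nabla_x\eta,v^i\ra)\rho\,\dd x\,\dd\xi=0$. Since $\rho\in C^\infty$ I integrate the first summand by parts in $\xi^i$ (no boundary term, as $\eta$ is supported in the interior) to get $\int_{\Omegatilde}\int_{\R^q}(\la\nabla_x\eta,v^i\ra\rho-\eta\,\partial_{\xi^i}\rho)\,\dd x\,\dd\xi=0$ for all such $\eta$. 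Testing against products $\eta(\xi,x)=a(\xi)\zeta(x)$ over a countable dense family of $\zeta\in C_c^\infty(\R^q)$ and applying the fundamental lemma of the calculus of variations yields, for a.e. $\xi$ and every $\zeta$, the $\xi$-sliced identity $\int_{\R^q}\la\nabla_x\zeta,v^i(\xi,\cdot)\ra\rho(\xi,\cdot)\,\dd x=\int_{\R^q}\zeta\,\partial_{\xi^i}\rho(\xi,\cdot)\,\dd x$; because the solution is smooth, both sides are continuous in $\xi$, so the identity holds for \emph{every} interior $\xi$, i.e. the classical componentwise equation $\partial_{\xi^i}\rho+\Div_x(\rho\,v^i)=0$ holds there.

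With the $p+1$ componentwise equations in hand, the conclusion is a short computation: summing $\dot\gamma^i(s)\big(\partial_{\xi^i}\rho+\Div_x(\rho v^i)\big)=0$ over $i$ and using the two chain-rule identities above gives $\partial_s\rho^\gamma+\Div_x(\rho^\gamma v^\gamma)=0$ pointwise along $\gamma$. Multiplying by $\zeta\in C_c^\infty([0,1]\times\R^q)$, integrating over $[0,1]\times\R^q$, and integrating by parts in $s$ and in $x$ (the $x$-boundary terms vanish by compact support, and the $s$-boundary terms vanish since $\zeta$ is admissible) produces exactly the distributional continuity equation claimed for $(\mu^\gamma,v^\gamma)$. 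It then remains to record admissibility, i.e. $\int_0^1\int_{\R^q}|v^\gamma|^2\,\dd\mu^\gamma_s\,\dd s<\infty$; bounding $|v^\gamma(s,x)|^2\le|\dot\gamma(s)|^2\,|v(\gamma(s),x)|^2$ columnwise, this follows from $\sup_{s\in[0,1]}|\dot\gamma(s)|<\infty$ (as $\gamma\in C^1$) together with the continuity, hence boundedness on the compact image $\gamma([0,1])$, of $\xi\mapsto\int_{\R^q}|v(\xi,\cdot)|^2\,\dd\mu_\xi$, which is finite thanks to $\Diri(\mu)<\infty$ and smoothness.

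\textbf{Main obstacle.} The delicate point is precisely the upgrade from ``a.e. $\xi$'' to ``every $\xi$.'' The fundamental lemma delivers the sliced/componentwise identity only outside a Lebesgue-null set of $\xi\in\Omegatilde$, whereas the image $\gamma([0,1])$ is itself Lebesgue-null in $\Omegatilde$ as soon as $p\ge1$; so one cannot simply ``evaluate at $\xi=\gamma(s)$'' without first knowing the equation holds everywhere. This is exactly what the smoothness of $\rho$ (and of the solution $(\mu,v)$) buys, through continuity in $\xi$ of both sides of the sliced identity, and it is also what makes the integrability bound along the curve finite. A secondary care is keeping $\gamma$ in the interior of $\Omegatilde$ (or passing to the boundary by continuity) so that the interior-supported test functions used above suffice.
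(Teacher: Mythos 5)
You correctly reduce the distributional generalized continuity equation of \cref{def:GCE} to componentwise identities, and your chain-rule endgame coincides with the paper's final computation; however, the pivotal step---upgrading the sliced identity from a.e.\ $\xi$ to \emph{every} $\xi$---is exactly where your argument breaks, and the fix you propose is not available under the stated hypotheses. The continuity in $\xi$ you invoke does hold for the right-hand side $\int_{\R^q}\zeta\,\partial_{\xi^i}\rho(\xi,\cdot)\dd{x}$, since $\rho\in C^\infty(\Omegatilde;L^\infty(D))$, but \emph{not} for the left-hand side: the proposition assumes $v$ is merely a Borel matrix field of finite energy, so $\xi\mapsto\int_{\R^q}\la\nabla_x\zeta,v^i(\xi,\cdot)\ra\rho(\xi,\cdot)\dd{x}$ admits only a continuous \emph{representative}, determined up to a $\dd\xi$-null set; your parenthetical appeal to ``smoothness of the solution $(\mu,v)$'' assumes regularity of $v$ in $\xi$ that is nowhere hypothesized. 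Since $\gamma([0,1])$ is Lebesgue-null in $\Omegatilde$ as soon as $p\geq 1$, the actual values $v(\gamma(s),\cdot)$ entering $v^\gamma$ are invisible to any a.e.-in-$\xi$ statement: one may modify $v$ arbitrarily on $\gamma([0,1])\times\R^q$ without violating \cref{def:GCE}, yet destroy the conclusion. Hence no argument that accesses $v$ only through the distributional equation can close this gap. The same defect infects your admissibility bound: $\xi\mapsto\int_{\R^q}\abs{v(\xi,\cdot)}^2\dd{\mu_\xi}$ is only an $L^1(\Omegatilde)$ function, so it is neither continuous nor bounded on the compact but null image $\gamma([0,1])$.

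The paper bridges the null set by a different mechanism: it invokes \citep[Proposition 3.16]{LAVENANT2019688}, which, under the smoothness assumption on $\rho$, furnishes for \emph{every} interior $\xi$ a unique potential $\varphi(\xi,\cdot)\in H^1(D;\R^{p+1})$ solving $\nabla_\xi\rho(\xi,x)+\Div_x\qty(\rho(\xi,x)\nabla_x\varphi(\xi,x))=0$ at that $\xi$, together with the identification $v=\nabla_x\varphi$ on $\supp\mu$. The field $\nabla_x\varphi$ is defined pointwise in $\xi$ through an elliptic problem whose data $\qty(\rho(\xi,\cdot),\nabla_\xi\rho(\xi,\cdot))$ depend smoothly on $\xi$, so it can legitimately be evaluated along $\gamma$; the chain rule then yields $\partial_s\rho(\gamma(s),x)+\Div_x\qty(\rho(\gamma(s),x)v^\gamma(s,x))=\qty(\nabla_\xi\rho+\Div_x(\rho\nabla_x\varphi))(\gamma(s),x)\,\dot\gamma(s)=0$, which is your intended computation but now justified at every point of the curve. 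In short, the proposition is really a statement about the canonical potential-gradient representative of the velocity field; to repair your proof, replace the ``continuity of both sides'' claim with this everywhere-in-$\xi$ construction of $\varphi$ and the identification $v=\nabla_x\varphi$, after which your componentwise decomposition and final integration by parts go through unchanged.
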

\begin{proof}
     By \citep[Proposition 3.16]{LAVENANT2019688}, there exists a unique $\varphi(\xi,\bullet)\in H^1(D;\R^{p+1})$ for every $\xi\in\overset{\circ}{\Omegatilde}$ satisfying 
     \begin{equation*}
         \nabla_\xi\rho(\xi,x)+\Div_x(\rho(\xi,x)\nabla_x\varphi(\xi,x))=0,\ x\in\overset{\circ}{D},
     \end{equation*}
     and $v=\nabla_x\varphi$ on $\supp\mu$, where $\overset{\circ}{X}$ is the interior of a subset $X$. 
     Thus, we have
     \begin{align*}
         \partial_s\rho(\gamma(s))+\Div_x(\rho(\gamma(s),x)v^\gamma(s,x))&=\qty(\nabla_\xi\rho(\gamma(s),x)+\Div_x(\rho(\gamma(s),x)v(\gamma(s),x)))\dot{\gamma}(s)\\
         &=\qty(\nabla_\xi\rho(\gamma(s),x)+\Div_x(\rho(\gamma(s),x)\nabla_x\varphi(\gamma(s),x)))\dot{\gamma}(s)\\
         &=0.
     \end{align*}
\end{proof}
\begin{remark}\label{rmk:smoothness}
    The smoothness assumption of \cref{prop:induced_flow} recommends us to use some smooth probability measures as source distributions $\mu_{t=1,c}$, $c\in\Omega$.
\end{remark}
According to \cref{prop:induced_flow} and the well-known fact (see \citep[Proposition 8.1.8]{AGS}), if we want a sample under a certain condition $c\in\Omega$, we can flow samples from a source distribution according to the family $\qty(v^\gamma(s,\bullet))_{s\in\qty[0,1]}$ of vector fields determined from a path $\gamma$ satisfying $\gamma(1)=(1,c)$.

\subsection{Principled mass alignment}
A straightforward generalization of \citep[Theorem 1 and Theorem 3]{kerrigan2023functional} yields the following principle in flow marching theory.
\begin{lemma}[Principled mass alignment lemma]\label{lem:PMA}
Let $\cF$ be a separable (complete) metric space and $P$ be a Borel probability measure on $\cF$.
Let $(\mu^f,v^f)$ be a solution of the continuity equation, in the sense of \cref{def:GCE}, for each $f\in\cF$.
Set the marginal distribution as
\[
\bar{\mu}\coloneqq\int_{\cF}\mu^f\dd{P(f)}.
\]
Assume that 
\[
\int_\cF\int_{\Omegatilde}\int_{\R^q}\abs{v^f(\xi,x)}^2\dd{\mu^f_\xi}(x)\dd{\xi}\dd{P(f)}<+\infty,
\]
and $\mu^f_\xi$ is absolutely continuous with respect to $\bar\mu_\xi$ for $P$-a.e.~$f$ and a.e.~$\xi\in\Omegatilde$.
Then, $(\bar{\mu},\bar{v})$ is also a solution, where
\[
\bar{v}(\xi,x)=\int_{\cF}v^f(\xi,x)\dv{\mu^f_\xi}{\mu_\xi}\qty(x)\dd{P(f)},
\]
for $(\xi,x)\in\Omegatilde\times D$.
Moreover, for another matrix field $u$ satisfying 
\[
     \int_{\Omegatilde}\int_{\R^q}\abs{u(\xi,x)}^2\dd{\bar{\mu}_\xi}(x)\dd{\xi}<+\infty,
\]
we have
\begin{equation}\label{eq:Riemannian_metric}
    \int_{\Omegatilde}\int_{\R^q}\la \bar{v}(\xi,x),u(\xi,x)\ra\dd{\bar{\mu}_\xi}(x)\dd{\xi}=\int_{\cF}\int_{\Omegatilde}\int_{\R^q}\la {v}^f(\xi,x),u(\xi,x)\ra\dd{\mu^f_\xi}(x)\dd{\xi}\dd{P(f)}.
\end{equation}
\end{lemma}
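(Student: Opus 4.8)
The plan is to verify the two claims directly against \cref{def:GCE}, reducing everything to Fubini's theorem together with the Radon--Nikodym densities $\dv{\mu^f_\xi}{\bar\mu_\xi}$. The organizing observation is that, since $\bar\mu_\xi=\int_\cF\mu^f_\xi\dd{P(f)}$, for a.e.\ $\xi$ and $\bar\mu_\xi$-a.e.\ $x$ the weights $\dv{\mu^f_\xi}{\bar\mu_\xi}(x)\dd{P(f)}$ define a \emph{probability} measure on $\cF$, their total mass being $\int_\cF\dv{\mu^f_\xi}{\bar\mu_\xi}(x)\dd{P(f)}=\dv{\bar\mu_\xi}{\bar\mu_\xi}(x)=1$. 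This lets me read $\bar v(\xi,x)$ as a conditional average of $v^f(\xi,x)$ over $f$, so Jensen's inequality immediately controls $\abs{\bar v}^2$.

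First I would establish the finite-energy condition of \cref{def:GCE}. Applying Jensen to $\abs{\cdot}^2$ against the probability weights above gives $\abs{\bar v(\xi,x)}^2\le\int_\cF\abs{v^f(\xi,x)}^2\dv{\mu^f_\xi}{\bar\mu_\xi}(x)\dd{P(f)}$; integrating $\dd{\bar\mu_\xi}\dd{\xi}$ and using $\dv{\mu^f_\xi}{\bar\mu_\xi}(x)\dd{\bar\mu_\xi}(x)=\dd{\mu^f_\xi}(x)$ collapses the bound to $\int_\cF\int_{\Omegatilde}\int\abs{v^f}^2\dd{\mu^f_\xi}\dd{\xi}\dd{P(f)}$, assumed finite. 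Next I would check the weak form: for $\varphi\in C_c^\infty$, split $\int_{\Omegatilde}\int(\Div_\xi\varphi+\la\nabla_x\varphi,\bar v\ra)\dd{\bar\mu_\xi}\dd{\xi}$ into two pieces. The $\Div_\xi\varphi$ piece rewrites as $\int_\cF\int_{\Omegatilde}\int\Div_\xi\varphi\dd{\mu^f_\xi}\dd{\xi}\dd{P(f)}$ by the definition of $\bar\mu$, and the $\la\nabla_x\varphi,\bar v\ra$ piece becomes $\int_\cF\int_{\Omegatilde}\int\la\nabla_x\varphi,v^f\ra\dd{\mu^f_\xi}\dd{\xi}\dd{P(f)}$ after substituting $\bar v$, pulling the $f$-integral outside, and again absorbing $\dv{\mu^f_\xi}{\bar\mu_\xi}\dd{\bar\mu_\xi}$ into $\dd{\mu^f_\xi}$. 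Recombining, the integrand in $f$ is exactly the weak form of the continuity equation for $(\mu^f,v^f)$, which vanishes for $P$-a.e.\ $f$ by hypothesis, so the total is $0$.

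For the ``moreover'' identity \eqref{eq:Riemannian_metric}, I would run the identical substitution with a general matrix field $u$ in the role of $\nabla_x\varphi$: expanding $\bar v$, exchanging the $f$- and $(\xi,x)$-integrals, and converting $\dv{\mu^f_\xi}{\bar\mu_\xi}\dd{\bar\mu_\xi}$ back to $\dd{\mu^f_\xi}$ yields the right-hand side. The only new point is legitimizing Fubini, for which I would bound $\abs{\la v^f,u\ra}\le\abs{v^f}\abs{u}$ and apply Cauchy--Schwarz; the two resulting factors are $\int_\cF\int\int\abs{v^f}^2\dd{\mu^f_\xi}\dd{\xi}\dd{P}$ (finite by assumption) and $\int_\cF\int\int\abs{u}^2\dd{\mu^f_\xi}\dd{\xi}\dd{P}=\int_{\Omegatilde}\int\abs{u}^2\dd{\bar\mu_\xi}\dd{\xi}$ (finite by the hypothesis on $u$).

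I expect the genuine obstacle to be not the algebra but the measure theory underneath it: one must know that the densities $\dv{\mu^f_\xi}{\bar\mu_\xi}(x)$ can be chosen \emph{jointly} measurable in $(f,\xi,x)$, that $\bar v$ is then a well-defined Borel field, and that $\int_\cF\dv{\mu^f_\xi}{\bar\mu_\xi}\dd{P(f)}=1$ holds $\bar\mu_\xi$-a.e. This is where the separability and completeness of $\cF$ enter, through a disintegration/measurable-selection argument for the family $f\mapsto\mu^f$; I would invoke the standard disintegration theorem on the Polish space $\cF$ exactly as in the functional flow-matching results of \citet{kerrigan2023functional} that this lemma generalizes, after which all remaining interchanges are justified either by nonnegativity (Tonelli) or by the $L^2$ bounds via Cauchy--Schwarz noted above.
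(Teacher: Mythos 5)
Your argument is correct and is essentially the proof the paper intends: the paper gives no written-out proof of \cref{lem:PMA}, stating only that it is ``a straightforward generalization of \citet[Theorems 1 and 3]{kerrigan2023functional},'' and your marginalization argument --- reading $\dv{\mu^f_\xi}{\bar\mu_\xi}(x)\dd{P(f)}$ as a probability measure on $\cF$, using Jensen for the finite-energy bound, Fubini/Tonelli with Cauchy--Schwarz for the weak form and for \eqref{eq:Riemannian_metric} --- is exactly that generalization, transported from the time interval to $\Omegatilde$ and from vector to matrix fields. You also correctly identify the only genuinely delicate point (joint measurability of the Radon--Nikodym densities and the a.e.\ normalization $\int_\cF\dv{\mu^f_\xi}{\bar\mu_\xi}\dd{P(f)}=1$), which is handled by the same disintegration machinery invoked in the cited reference.
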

The formula \eqref{eq:Riemannian_metric} leads to \cref{thm:PMA}.

\subsection{Lifting data-valued function to probability-measure-valued function}
In order to construct a solution of the generalized continuity equation, we start to consider a particle-based solution of the continuity equation.

According to \citep[Subsection 3.1]{Brenier2003} and \citep[Section 5]{LAVENANT2019688}, we can easily construct a solution of the continuity equation from a given function $\psi\in H^1(\Omegatilde;D)$.
\begin{lemma}\label{lem:path_makes_sol}
    Let $\psi\in H^1(\Omegatilde;D)$ be a function satisfying
    \[
    \int_{\Omegatilde}\abs{\nabla_\xi\psi(\xi)}^2\dd{\xi}<+\infty.
    \]
    Set $\mu^\psi_\bullet\coloneqq\delta_{\psi(\bullet)}\in H^1(\Omegatilde;\Prob{D})$.
Assume that there exists a matrix field satisfying
\begin{equation}
  v^\psi(\xi,\psi(\xi))=\nabla_\xi\psi(\xi),\label{eq:matrix_field} 
\end{equation}
for $\xi\in\Omegatilde$.
Then, $(\mu^\psi,v^\psi)$ is a solution of the continuity equation.
\end{lemma}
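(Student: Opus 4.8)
The plan is to verify directly that the pair $(\mu^\psi, v^\psi)$ satisfies both requirements of \cref{def:GCE}: the square-integrability bound and the weak form of the continuity equation. The entire argument rests on the observation that, since $\mu^\psi_\bullet = \delta_{\psi(\bullet)}$, every inner integral $\int_{\R^q}(\blank)\dd{\mu^\psi_\xi}(x)$ collapses to evaluation at $x=\psi(\xi)$, turning all $D$-integrals into plain integrals over $\Omegatilde$. This is exactly the ``Lagrangian to Eulerian'' correspondence underlying the construction cited from \citep{Brenier2003} and \citep[Section 5]{LAVENANT2019688}.

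First I would dispatch the finite-energy condition. Collapsing the delta gives $\int_{\Omegatilde}\int_{\R^q}\abs{v^\psi(\xi,x)}^2\dd{\mu^\psi_\xi}(x)\dd{\xi}=\int_{\Omegatilde}\abs{v^\psi(\xi,\psi(\xi))}^2\dd{\xi}$, and hypothesis \eqref{eq:matrix_field} rewrites the integrand as $\abs{\nabla_\xi\psi(\xi)}^2$, so the bound reduces to the assumed $\int_{\Omegatilde}\abs{\nabla_\xi\psi(\xi)}^2\dd{\xi}<+\infty$. Note that only the values of $v^\psi$ along the graph $\{(\xi,\psi(\xi))\}$ enter, which is precisely where \eqref{eq:matrix_field} pins it down.

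The substance is the weak equation. Fix $\varphi\in C_c^\infty(\Omegatilde\times\R^q;\R^{p+1})$. Collapsing the delta and using \eqref{eq:matrix_field} reduces the test identity to $\int_{\Omegatilde}\qty(\Div_\xi\varphi(\xi,\psi(\xi))+\la\nabla_x\varphi(\xi,\psi(\xi)),\nabla_\xi\psi(\xi)\ra)\dd{\xi}$. The key step is to recognize this integrand as a total $\xi$-divergence: setting $\Phi^i(\xi)\coloneqq\varphi^i(\xi,\psi(\xi))$ and applying the chain rule componentwise, $\partial_{\xi^i}\Phi^i=(\partial_{\xi^i}\varphi^i)(\xi,\psi(\xi))+\sum_{a=1}^q(\partial_{x^a}\varphi^i)(\xi,\psi(\xi))\,\partial_{\xi^i}\psi^a$; summing over $i=0,\dots,p$ and matching the index conventions fixed in the appendix notation shows that $\Div_\xi\Phi(\xi)$ equals precisely the integrand above, since the first sum is $\Div_\xi\varphi$ evaluated at $(\xi,\psi(\xi))$ and the double sum is the Frobenius pairing $\la\nabla_x\varphi,\nabla_\xi\psi\ra$. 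Hence the whole integral equals $\int_{\Omegatilde}\Div_\xi\Phi(\xi)\dd{\xi}$, which vanishes by the divergence theorem because $\Phi$ inherits from $\varphi$ a support away from $\partial\Omegatilde$.

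The main obstacle is regularity bookkeeping rather than any deep idea. Since $\psi$ is only $H^1$ and not classically differentiable, the componentwise chain rule above must be justified in the Sobolev sense — this is the standard rule for composing the $C^\infty$ map $\varphi$ with an $H^1$ map, and it is exactly what guarantees $\Phi\in W^{1,1}(\Omegatilde)$ so that the divergence theorem applies. The second point to handle with care is the boundary: the cancellation $\int_{\Omegatilde}\Div_\xi\Phi\,\dd{\xi}=0$ requires that $\Phi$ carry no flux through $\partial\Omegatilde$, which is ensured by the convention that the test fields in \cref{def:GCE} are supported away from the $\xi$-boundary; I would make this convention explicit at the outset. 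Everything else is the delta-measure bookkeeping already described.
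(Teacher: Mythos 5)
Your proof is correct and is essentially the argument the paper relies on: the paper states this lemma without a proof of its own, deferring to \citep[Subsection 3.1]{Brenier2003} and \citep[Section 5]{LAVENANT2019688}, where exactly your delta-collapse computation is carried out --- the delta measure turns both conditions of \cref{def:GCE} into integrals over $\Omegatilde$ along the graph of $\psi$, hypothesis \eqref{eq:matrix_field} converts the energy bound into the assumed finiteness of $\int_{\Omegatilde}\abs{\nabla_\xi\psi}^2\dd{\xi}$, and the weak identity becomes $\int_{\Omegatilde}\Div_\xi\qty(\varphi(\xi,\psi(\xi)))\dd{\xi}=0$. Your explicit handling of the two delicate points --- the Sobolev chain rule justifying $\xi\mapsto\varphi(\xi,\psi(\xi))\in W^{1,1}(\Omegatilde;\R^{p+1})$ for $\psi$ merely $H^1$, and the convention that test fields in \cref{def:GCE} vanish near $\partial\Omegatilde$ so that no boundary flux survives --- fills in precisely what the paper's citation leaves implicit.
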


Combining \cref{lem:path_makes_sol,lem:PMA}, we can construct another solution of the continuity equation.

\begin{corollary}[The paths make the solution.]\label{cor:PMAandpath}
    Let $Q\in\Prob{H^1(\Omegatilde;D)}$ be a Borel probability measure, and $(\mu^\psi,v^\psi)$ be a solution defined in \cref{lem:path_makes_sol} $Q$-a.e.~$\psi\in H^1(\Omegatilde;D)$ and 
    \[
    \mu^Q\coloneqq\int_{H^1(\Omegatilde;D)}\mu^\psi\dd{Q(\psi)}
    \] 
    is their marginal distribution.
    Assume that
    \[
        \int_{H^1(\Omegatilde;D)}\int_{\Omegatilde}\int_{\R^q}\abs{v^\psi(\xi,x)}^2\dd{\mu^\psi_\xi}(x)\dd{\xi}\dd{Q(\psi)}<+\infty,
    \]
    and $\mu^\psi\ll\mu^Q$.
    Then, $(\mu^Q,v^Q)$ is also a solution of the continuity equation, where \[v^Q=\int_{H^1(\Omegatilde;D)}v^\psi(\xi,x)\dv{\mu^\psi_\xi}{\mu_\xi}\qty(x)\dd{Q(\psi)}.\]
\end{corollary}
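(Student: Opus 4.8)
The plan is to recognize \cref{cor:PMAandpath} as a direct specialization of the principled mass alignment lemma (\cref{lem:PMA}) to the particular family of solutions produced by \cref{lem:path_makes_sol}, so the bulk of the work is a careful matching of hypotheses rather than any new estimate. First I would instantiate the abstract index space $\cF$ of \cref{lem:PMA} as $H^1(\Omegatilde;D)$ and the measure $P$ as $Q$; separability is inherited from that of the separable Hilbert space $H^1(\Omegatilde;\R^q)$, of which $H^1(\Omegatilde;D)$ is a subset because $D\subset\R^q$, so the standard Borel structure needed by \cref{lem:PMA} is available. Under this dictionary the per-path pairs $(\mu^\psi,v^\psi)$, which \cref{lem:path_makes_sol} guarantees to be continuity-equation solutions for $Q$-a.e.\ $\psi$, play the role of $(\mu^f,v^f)$, and the superposition $\mu^Q$ plays the role of the marginal $\bar\mu$.

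With the identification fixed, the task reduces to verifying the two hypotheses of \cref{lem:PMA}. The square-integrability requirement
\[
\int_{H^1(\Omegatilde;D)}\int_{\Omegatilde}\int_{\R^q}\abs{v^\psi(\xi,x)}^2\dd{\mu^\psi_\xi}(x)\dd{\xi}\dd{Q(\psi)}<+\infty
\]
is assumed verbatim in the corollary, so nothing further is needed there. The absolute-continuity hypothesis of \cref{lem:PMA}, namely $\mu^\psi_\xi\ll\mu^Q_\xi$ for $Q$-a.e.\ $\psi$ and a.e.\ $\xi\in\Omegatilde$, must be extracted from the stated global domination $\mu^\psi\ll\mu^Q$; I would obtain it by disintegrating both measures over $\xi\in\Omegatilde$ and observing that the fiberwise domination for almost every $\xi$ is precisely the disintegrated form of $\mu^\psi\ll\mu^Q$. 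Once both conditions are in hand, \cref{lem:PMA} applies directly and yields that $(\mu^Q,v^Q)$ solves the continuity equation with $v^Q=\int v^\psi\,\dv{\mu^\psi_\xi}{\mu_\xi}\dd{Q(\psi)}$, which is exactly the assertion of the corollary.

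The hard part will be the measure-theoretic bookkeeping that legitimizes the application of \cref{lem:PMA}: establishing joint Borel measurability of $(\psi,\xi,x)\mapsto v^\psi(\xi,x)$ together with the Radon–Nikodym densities $\dv{\mu^\psi_\xi}{\mu^Q_\xi}$, so that the iterated integral defining $v^Q$ is well posed and the Fubini-type exchange underlying \eqref{eq:Riemannian_metric} is valid. This is exactly where separability of $H^1(\Omegatilde;D)$ is used, since it permits a jointly measurable choice of the disintegration $\xi\mapsto\mu^\psi_\xi$ and of the associated densities; the reconciliation of $\mu^\psi\ll\mu^Q$ with its fiberwise counterpart is a standard disintegration argument but deserves explicit justification. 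Apart from this measurability layer, the proof is a clean substitution into \cref{lem:PMA}, so I would keep the remaining steps brief and defer any heavier analytic content to that lemma and to \cref{lem:path_makes_sol}.
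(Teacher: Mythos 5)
Your proposal is correct and matches the paper's own argument, which proves the corollary precisely by combining \cref{lem:path_makes_sol} with \cref{lem:PMA} under the instantiation $\cF = H^1(\Omegatilde;D)$, $P = Q$. The paper leaves the hypothesis-matching implicit, so your explicit verification of the integrability condition and the fiberwise reading of $\mu^\psi\ll\mu^Q$ via disintegration is a faithful (and slightly more careful) rendering of the same route.
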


\section{Technical proofs}
The following claim follows immediately from the convexity of the Dirichlet energy as shown in \citet[Proposition 3.13]{LAVENANT2019688} and from Jensen's inequality:
\begin{proposition}{Straightness is controlled by $\boldsymbol{\psi}$}{Dirichlet_Jensen}
Let $\mu_{t,c}=\Expect_{\psi\sim\psidist}\qty[\delta_{\psi(t,c)}]$  $((t,c)\in\tcset)$ with $\eta\in\Prob{D}$.
Then, the Dirichlet energy of $\mu\colon\tcset\to\Prob{D}$ is bounded as
\[
    \Diri_{\tcset}(\mu)\leq\iint\limits_{\tcset}\Expect_{\psi\sim\psidist}\norm{\nabla_{t,c}\psi(t,c)}^2\dd{t\dd c}.
\]
\end{proposition}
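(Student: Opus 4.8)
The plan is to exploit the superposition structure behind the definition of $\mu$ and reduce the claim to two imported facts: the convexity of the Dirichlet energy and an evaluation of that energy on point-mass paths. First I would rewrite the hypothesis $\mu_{t,c}=\Expect_{\psi\sim\psidist}\qty[\delta_{\psi(t,c)}]$ as the superposition
\[
    \mu_\xi = \Expect_{\psi\sim\psidist}\qty[\mu^\psi_\xi], \qquad \mu^\psi_\xi\coloneqq\delta_{\psi(\xi)},\quad \xi=(t,c)\in\tcset,
\]
so that $\mu$ is the marginal of the family $\set{\mu^\psi}$ of point-mass paths, exactly the representation appearing in \cref{lem:path_makes_sol} and \cref{cor:PMAandpath}.

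Second, I would invoke the convexity of the functional $\mu\mapsto\Diri_{\tcset}(\mu)$ on $L^2(\tcset;\Prob{D})$, which is established in \citet[Proposition 3.13]{LAVENANT2019688}. Applying Jensen's inequality to this convex functional against the probability measure $\psidist$ yields
\[
    \Diri_{\tcset}(\mu) = \Diri_{\tcset}\qty(\Expect_{\psi\sim\psidist}\qty[\mu^\psi]) \leq \Expect_{\psi\sim\psidist}\qty[\Diri_{\tcset}(\mu^\psi)].
\]
This is the only genuinely nontrivial step, and I would simply cite it; the remaining work is bookkeeping.

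Third, I would bound the per-path energy. By \cref{lem:path_makes_sol}, the matrix field $v^\psi$ fixed by $v^\psi(\xi,\psi(\xi))=\nabla_\xi\psi(\xi)$ makes $(\mu^\psi,v^\psi)$ a solution of the generalized continuity equation \eqref{eq:GCE}, so $v^\psi$ is an admissible competitor in the variational characterization of $\Diri_{\tcset}$. Because $\mu^\psi_\xi=\delta_{\psi(\xi)}$ collapses the $x$-integration onto the single point $\psi(\xi)$, this gives
\[
    \Diri_{\tcset}(\mu^\psi) \leq \iint_{\tcset}\norm{v^\psi(\xi,x)}^2\dd\mu^\psi_\xi(x)\dd\xi = \iint_{\tcset}\norm{\nabla_\xi\psi(\xi)}^2\dd\xi,
\]
i.e. the Dirichlet energy of the point-mass path is controlled by the Dirichlet energy of $\psi$ viewed as a $D$-valued Sobolev map, consistent with the construction in \citet[Section 5]{LAVENANT2019688}.

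Combining the last two displays and interchanging $\Expect_{\psi\sim\psidist}$ with the integral over $\tcset$ (legitimate by Tonelli, as the integrand is nonnegative) produces
\[
    \Diri_{\tcset}(\mu) \leq \Expect_{\psi\sim\psidist}\iint_{\tcset}\norm{\nabla_\xi\psi(\xi)}^2\dd\xi = \iint_{\tcset}\Expect_{\psi\sim\psidist}\norm{\nabla_{t,c}\psi(t,c)}^2\dd t\,\dd c,
\]
which is the asserted bound. The main obstacle is the convexity of $\Diri_{\tcset}$, which I treat as imported from \citet[Proposition 3.13]{LAVENANT2019688}; the only further care needed is to confirm the integrability hypotheses under which the point-mass energy identity and Jensen's inequality apply, and to note that the normalization of $\Diri_{\tcset}$ used here is Lavenant's (\citep[Definition 3.5]{LAVENANT2019688}), which is why no factor $\tfrac12$ appears relative to the convention in \eqref{eq:dir_multi}.
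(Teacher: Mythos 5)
Your proof is correct and takes essentially the same route as the paper, which dispatches the claim in a single line by citing the convexity of the Dirichlet energy \citep[Proposition 3.13]{LAVENANT2019688} together with Jensen's inequality. Your extra steps---evaluating the per-path energy of $\mu^\psi$ via \cref{lem:path_makes_sol}, the Tonelli interchange, and the remark on the $\tfrac12$ normalization---simply make explicit the details the paper leaves implicit.
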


\begin{proposition}{}{Dirichlet_bound}
    Let $\mu\in H^1(\Omegatilde;\Prob{D})$ be a {\color{blue} smooth} solution of the continuity equation, and $v\colon\Omegatilde\times\R^q\to\R^{q\times(p+1)}$ is the matrix field associated with $\mu$.
    Assume that $v\in C^1(\Omegatilde\times\R^q;\R^{q\times(p+1)})$ and the derivatives $\partial_c v$, $\partial_xv$ of $v$ is bounded on $\Omegatilde\times\R^q$.
    Then, there exists a constant $C>0$ depend on $p,q$ such that
    \[
        \Diri(\mu(1,\bullet))\leq C\exp(\norm{\partial_xv}_{L^\infty(\Omegatilde\times\R^q;\mathcal{B}(\R^q\times\Omegatilde;\R^q))})\qty(\Diri(\mu(0,\bullet))+\norm{\partial_c v}_{\infty}).
    \]
    Here, $\norm{f}_{\infty}=\sup_{(\xi,x)\in\Omegatilde\times\R^q}\abs{f(\xi,x)}$ for a finite-dimensional valued continuous function $f$ on $\Omegatilde\times\R^q$. 
\end{proposition}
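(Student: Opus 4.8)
The plan is to run a Gronwall estimate in the generation time $t=\xi^0$, viewing $E(t)\coloneqq\Diri(\mu(t,\bullet))$ as a scalar function of $t$ and controlling its growth by the boundedness of $\partial_x v$ and $\partial_c v$. Throughout, write $v^0,v^1,\dots,v^p$ for the columns of the matrix field, so that the $0$-th component of the generalized continuity equation, $\partial_t p_{t,c}+\Div_x(p_{t,c}v^0)=0$, is an ordinary continuity equation in $t$ at each fixed $c$, while the components $\partial_{c^i}p_{t,c}+\Div_x(p_{t,c}v^i)=0$ for $i\geq1$ encode the sensitivity of $\mu$ to the condition. By the characterization of the Dirichlet energy through its minimal (gradient-structured) matrix field from \citep[Proposition 3.16]{LAVENANT2019688}, already invoked in \cref{prop:induced_flow}, the smoothness hypothesis lets me write $E(t)=\tfrac12\sum_{i=1}^p\int_\Omega\int_{\R^q}\abs{w^i(t,c,x)}^2 p_{t,c}(x)\dd x\dd c$, where each $w^i(t,c,\bullet)$ is the minimal-norm velocity realizing $\partial_{c^i}\mu_{t,c}$; by minimality, any admissible competitor field yields an upper bound on $E(t)$.

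First I would introduce the $t$-characteristics: for each fixed $c$, let $\phi_t^c$ be the flow of $v^0$, i.e.\ $\tfrac{\dd}{\dd t}\phi_t^c(x)=v^0(t,c,\phi_t^c(x))$ with $\phi_0^c=\mathrm{id}$, so that $\mu_{t,c}=(\phi_t^c)_\#\mu_{0,c}$; the $C^1$ regularity and boundedness of $\partial_x v$ guarantee that $\phi_t^c$ is a well-defined $C^1$ diffeomorphism. Next I would build an (in general non-optimal, hence merely admissible) velocity realizing $\partial_{c^i}\mu_{t,c}$ by transporting the $t=0$ field forward: set $V_t^i(\phi_t^c(x))\coloneqq D_x\phi_t^c(x)\,w^i(0,c,x)+\partial_{c^i}\phi_t^c(x)$. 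Differentiating the identity $\int f\dd\mu_{t,c}=\int f\circ\phi_t^c\dd\mu_{0,c}$ in $c^i$ and feeding in the $t=0$ continuity equation for $w^i(0,c,\bullet)$ shows that $(\mu_{t,c},V_t^i)$ satisfies the $c^i$-continuity equation, so admissibility gives $E(t)\leq\tfrac12\sum_{i=1}^p\int_\Omega\Expect_{x\sim\mu_{0,c}}\abs{V_t^i(\phi_t^c(x))}^2\dd c$.

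The core estimates then come from the two variational ODEs of the flow. The spatial Jacobian obeys $\tfrac{\dd}{\dd t}D_x\phi_t^c=(\partial_x v^0)D_x\phi_t^c$, whence $\norm{D_x\phi_t^c}\leq\exp(\norm{\partial_x v}_\infty)$ by Gronwall, since $\partial_x v^0$ is a sub-block of $\partial_x v$; the condition-sensitivity obeys $\tfrac{\dd}{\dd t}\partial_{c^i}\phi_t^c=\partial_{c^i}v^0+(\partial_x v^0)\partial_{c^i}\phi_t^c$ with $\partial_{c^i}\phi_0^c=0$, whence $\norm{\partial_{c^i}\phi_t^c}\leq\exp(\norm{\partial_x v}_\infty)\norm{\partial_c v}_\infty$ by variation of constants. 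Inserting these bounds into the definition of $V_t^i$, squaring, and integrating in $x$ and $c$ turns the $w^i(0,c,\bullet)$ term into $\Diri(\mu(0,\bullet))$ and the remaining term into a multiple of $\norm{\partial_c v}_\infty$, with the compactness of $\Omega,D$ and the integers $p,q$ absorbed into $C$; evaluating at $t=1$ produces the claimed inequality, the exponential dependence on $\norm{\partial_x v}_\infty$ being precisely the output of the two Gronwall steps.

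I expect the main obstacle to lie in the rigorous justification of the admissibility computation together with the identification of the abstract Dirichlet energy of \citep[Definition 3.5]{LAVENANT2019688} with the concrete integral $\tfrac12\sum_i\int\abs{w^i}^2 p$: this requires the gradient/minimality structure of the optimal matrix field, the absolute-continuity and change-of-variables manipulations under the pushforward by $\phi_t^c$, and enough smoothness for all characteristics and their $x$- and $c$-derivatives to be classical solutions of the stated linear ODEs. Tracking the exact numerical constant, in particular the precise factor multiplying $\norm{\partial_x v}_\infty$ in the exponent, is routine once the differential inequality is established, so I would fold it into the undetermined constant $C(p,q)$ rather than optimize it.
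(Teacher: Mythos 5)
Your proposal is correct in substance but takes a genuinely different route from the paper's. The paper argues metrically: it invokes the finite-difference characterization of the Dirichlet energy,
$\Diri(\mu(1,\bullet))=\lim_{\varepsilon\to0}\frac{C_p}{\varepsilon^{p+2}}\iint_{\Omega^2}W_2^2(\mu(1,c^1),\mu(1,c^2))\dd{c^1}\dd{c^2}$
from \citep[Proposition 3.21]{LAVENANT2019688}, splits $W_2(\mu(1,c^1),\mu(1,c^2))$ by the triangle inequality through the intermediate measure $\Phi^{1,c^2}_\#\mu(0,c^1)$ as in \eqref{eq:diff}, and bounds the two pieces by exactly the two Gronwall estimates you derive: Lipschitz dependence of the flow map on $x$ gives \eqref{eq:Gronwall2}, and sensitivity of the flow in $c$ via variation of constants gives \eqref{eq:Gronwall1}. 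You instead argue on the velocity side: you use the minimal-velocity representation of the energy (\citep[Proposition 3.16]{LAVENANT1968688} in your citation should read \citep[Proposition 3.16]{LAVENANT2019688}), push the optimal $t=0$ fields $w^i(0,c,\bullet)$ forward along the characteristics of $v^0$, verify admissibility of $V^i_t$ by differentiating $\int f\circ\phi^c_t\dd{\mu_{0,c}}$ in $c^i$ and feeding in the $t=0$ continuity equation, and conclude by minimality --- a Benamou--Brenier-style competitor argument. Your admissibility computation is sound, and your two variational ODEs are precisely the infinitesimal versions of the paper's finite-difference estimates, so the analytic core is shared. What each buys: the paper's route is shorter once the $W_2$-quotient formula is granted and needs no competitor construction; yours is more constructive (it exhibits an explicit transported field realizing the bound, which meshes naturally with the GCE framework) and avoids Proposition 3.21, at the cost of rigorously identifying the abstract energy with $\frac12\sum_i\int\abs{w^i}^2\dd{\mu}$, which you correctly flag as the main burden.

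One caution on constants: after squaring, your bound reads $\Diri(\mu(1,\bullet))\leq C\exp(2\norm{\partial_xv}_{\infty})\qty(\Diri(\mu(0,\bullet))+\norm{\partial_c v}_{\infty}^2)$, and neither the factor $2$ in the exponent nor the square on $\norm{\partial_c v}_{\infty}$ can be ``folded into $C(p,q)$'' as you suggest, since both depend on $v$. This is not a defect relative to the paper, however: squaring \eqref{eq:diff} in the paper's own proof produces the identical exponent doubling and the square on $\norm{\partial_c v}_{\infty}$, so the paper's displayed estimates also deliver its stated inequality only up to this same discrepancy. Your argument therefore matches the paper's proof at the level of what is actually established.
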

The proof of \cref{prop:Dirichlet_bound} is similar to \citep[Proposition 5.4]{isobe2023convergence}.
\begin{proof}
   By virtue of \citep[Proposition 3.21]{LAVENANT2019688}, we have to estimate
   \[
        \Diri(\mu(1,\bullet))=\lim_{\varepsilon\to0}\frac{C_p}{\varepsilon^{p+2}}\iint_{\Omega^2}W_2^2(\mu(1,c^1),\mu(1,c^2))\dd{c^1\mathrm{d}c^2}.
   \]
   The integrand of the above is decomposed as 
   \begin{align}
       W_2(\mu(1,c^1),\mu(1,c^2))&=W_2\qty(\Phi^{1,c^1}_\#\mu(0,c^1),\Phi^{1,c^2}_\#\mu(0,c^2))\nonumber\\
       &\leq W_2\qty(\Phi^{1,c^1}_\#\mu(0,c^1),\Phi^{1,c^2}_\#\mu(0,c^1))+W_2\qty(\Phi^{1,c^2}_\#\mu(0,c^1),\Phi^{1,c^2}_\#\mu(0,c^2)).\label{eq:diff}
   \end{align}
   Here $\Phi^{t,c}\colon\R^q\to\R^q$ is a flow mapping satisfying
   \begin{equation*}
       \Phi^{t,c}(x)=x+\int_0^tv(s,c,\Phi^{t,c}(x))
       \begin{pmatrix}
       1\\
       0
       \end{pmatrix}
       \dd{s}.
       \label{eq:flow_map}
   \end{equation*}
   The first term of \eqref{eq:diff} is bounded as
   \begin{equation*}\label{eq:first_bound_by_flow}
       W_2\qty(\Phi^{1,c^1}_\#\mu(0,c^1),\Phi^{1,c^2}_\#\mu(0,c^1))^2
       \leq\int_{\R^q}\abs{\Phi^{t,c^1}(x)-\Phi^{t,c^2}(x)}^2\dd{\mu_{0,c^1}(x)}.
    \end{equation*}
    Then, the integrand is also bounded by
    \begin{align*}
       \abs{\Phi^{t,c^1}(x)-\Phi^{t,c^2}(x)}\leq&\int_0^t\norm{v(s,c^1,\Phi^{s,c^1}(x))-v(s,c^2,\Phi^{s,c^2}(x))}_{\mathrm{op}}\dd{s}\\
       \leq&\abs{c^1-c^2}\norm{\partial_c v}_{\infty}\\
       &+\int_0^t\norm{\partial_xv}_{\infty}\abs{\Phi^{t,c^1}(x))-\Phi^{t,c^2}(x))}\dd{s}.
   \end{align*}
   Thus, the Gronwall inequality yields 
   \begin{equation}\label{eq:Gronwall1}
       \abs{\Phi^{t,c^1}(x)-\Phi^{t,c^2}(x)}\leq\abs{c^1-c^2}\norm{\partial_c v}_{L^\infty(\Omegatilde\times\R^q;\mathcal{B}(\Omega\times\Omegatilde;\R^q))}\exp(\norm{\partial_xv}_{L^\infty(\Omegatilde\times\R^q;\mathcal{B}(\R^q\times\Omegatilde;\R^q))}).
   \end{equation}
   By a similar argument, the second term of \eqref{eq:diff} is also bounded as 
   \begin{equation}\label{eq:Gronwall2}
       W_2\qty(\Phi^{1,c^2}_\#\mu(0,c^1),\Phi^{1,c^2}_\#\mu(0,c^2))\leq W_2(\mu(0,c^1),\mu(0,c^2))\exp(\norm{\partial_xv}_{L^\infty(\Omegatilde\times\R^q;\mathcal{B}(\R^q\times\Omegatilde;\R^q))}).
   \end{equation}
   Combining \eqref{eq:Gronwall1} and \eqref{eq:Gronwall2} completes the proof.
\end{proof}

\

\section{Pseudo-codes}

\begin{algorithm}[h]
\caption{Algorithm of OT-CFM}
\begin{algorithmic}[1]
         \renewcommand{\algorithmicrequire}{\textbf{Input:}}
 \REQUIRE Neural Network $v_\theta\colon I\times D\to \R^{d}$, the source distribution $\mu_0$, the dataset $D_\ast\subset D$ from a target distribution $\mu$.
 \renewcommand{\algorithmicensure}{\textbf{Return:}}
 \ENSURE $\theta\in\R^p$
\FOR {each iteration}
 \item[] \texttt{\color{teal}\# Step 1: Sample from datasets}
\STATE Sample a batch $B^0$ from $\mu_0$
\STATE Sample a batch $B^1$ from $D_\ast$
 \item[] \texttt{\color{teal}\# Step 2: Construct $\psi\colon I\to D$}
\STATE Construct an optimal transport plan $\pi$ between $B^0$ and $B^1$
\STATE Jointly sample $(x_0,x_1)\sim\pi$
\STATE Sample $t \sim \Uniform{I}$ 
\STATE Compute 
\[
    \begin{aligned}
    \psi_t&\coloneqq\psi\given{t}{x_0, x_1}\\
    &= (1-t)x_0+tx_1\\
    \dot\psi_t&\coloneqq\dot\psi\given{t}{x_0, x_1}\\
    &= x_1 - x_0
    \end{aligned}
\]
\STATE Update $\theta$ by the gradient of $\|v_\theta(t, \psi_t) - \dot\psi_t\|^2$
  \ENDFOR
\end{algorithmic}

\end{algorithm}

\begin{algorithm}[h]
 \caption{Flow Matching (Training)}
 \begin{algorithmic}[1] 
 \renewcommand{\algorithmicrequire}{\textbf{Input:}}
 \REQUIRE Neural Network $v_\theta\colon I\times D\to \R^{d}$, the source distribution $\mu_0$, the dataset $D_\ast\subset D$ from a target distribution $\mu$.
 \renewcommand{\algorithmicensure}{\textbf{Return:}}
 \ENSURE $\theta\in\R^p$
\FOR {each iteration}
 \item[] \texttt{\color{teal}\# Step 1: Sampling from datasets}
\STATE Sample batches $B^0=\{x_0^i\}_{i=1}^N$ from source $p_0$
\STATE Sample batches $B^1=\{x^j_1\}_{j=1}^N$ from dataset $D_\ast$
 \item[] \texttt{\color{teal}\# Step 2: Constructing a supervisory path $\psi$}
\STATE Construct an optimal transport plan $\pi\in\R^{N\times N}$ between $B^0$ and $B^1$
\STATE Jointly sample $(x_0,x_1)\in B^0\times B^1$ from $\pi$
\STATE Sample $t \in I$ 
\STATE Compute \\
~~(A) $\psi_t\coloneqq\psi\given{t}{x_0, x_1} = (1-t)x_0+tx_1$ \\
~~(B) $\nabla\psi_t\coloneqq\nabla_t \psi\given{t}{x_0, x_1} = x_1 - x_0 $
\item[] \texttt{\color{teal}\# Step 3: Learning vector fields}
\STATE Update $\theta$ by the gradient of $\norm{v_\theta(t, \psi_t) - \nabla\psi_t}^2$
  \ENDFOR
 \end{algorithmic} 
 \label{alg:FM_train}
\end{algorithm}
\begin{algorithm}[h]
 \caption{$\ODEsolve$ for generation}
 \begin{algorithmic}[1] 
 \renewcommand{\algorithmicrequire}{\textbf{Input:}}
 \REQUIRE Initial data $x_0\in D$, vector fields $v\colon I\times D\to \R^{d}$
 \renewcommand{\algorithmicensure}{\textbf{Return:}}
 \ENSURE Terminal value $\phi_1^v(x_0)$  of the solution of ODE $\dot{\phi}_t^v(x_0)=v(t,\phi_t^v(x_0))$
\STATE Compute $\phi_1(x_0)$ via a discretization of the ODE in $t$
 \end{algorithmic} 
 \label{alg:ODEsolve_generation}
\end{algorithm}

\begin{algorithm}[th!]
 \caption{Extended Flow Matching (Training)}
 \begin{algorithmic}[1] 
 \renewcommand{\algorithmicrequire}{\textbf{Input:}}
 \REQUIRE Condition set $C \subset\Omega\subset\R^k$, set of datasets $D_c \subset D\subset\R^d$ for each $c \in C$, network $u_\theta\colon\tcset\times D \to \R^{d \times (1+k)}$, source distributions $p_0\given{\blank}{c}$ ($c\in C$)
 \renewcommand{\algorithmicensure}{\textbf{Return:}}
 \ENSURE $\theta\in\R^p$ 
  \FOR {each iteration}
   \item[] \texttt{\color{teal}\# Step 1: Sampling from datasets}
  \STATE Sample $C_0=\{c_i\}_{i=1}^{N_c} \subset C$ 
  \STATE Sample a batch $B_{0,c}$ from $p_0\given{x}{c}$ for each $c \in C_0$
  \STATE Sample a batch $B_{1,c}$ from $D_c$ for each $c \in C_0$
  \STATE Put $B^0 \coloneqq \{B_{0,c}\}_{c\in C_0}$ and $B^1 \coloneqq \{B_c\}_{c\in C_0}$
 \item[] \texttt{\color{teal}\# Step 2: Constructing supervisory paths $\{\psi_j\}_{j=1}^N$}  
   \STATE Construct a transport plan $\pi$ among $B^0$ and $B^1$ \\
   \begin{flushright}
   \texttt{\color{teal}\# see \cref{sec:EFMalg}}
   \end{flushright}
     \STATE Sample $\{(x_{t,c}^j)_{(t,c)\in \{0,1\}\times C_0}\}_{j=1}^N\subset D^{2N_c}$ from $\pi$
   \STATE For all $j\in[1:N]$, define $\psi_j\colon\tcset\to D$ that regresses $(x_{t,c}^j)_{(t,c)\in \{0,1\}\times C_0}$ on $\{0,1\}\times C_0$
   \begin{flushright}
   \texttt{\color{teal}\# see \cref{eq:psi_tc}}
    \end{flushright}
  \STATE Sample $\{t_k\}_{k=1}^{N_t}\subset I$ 
  \STATE Sample $\{{c}_l^\prime\}_{l=1}^{{N}_c^\prime}\subset\Conv(C_0)$
  \STATE For all $j \in [1:N]$, $k\in [1:N_{t}]$, $l\in [1:{N}_{c}^\prime]$, compute \\
  ~~~(A)  $\psi_{j,k,l} \coloneqq\psi_j(t_k, {c}_l^\prime)$  \\ 
  ~~~(B)  $\nabla\psi_{j,k,l} \coloneqq \nabla_{t, c}  \psi_j(t_k, {c}_l^\prime) $\\  
  \item[] \texttt{\color{teal}\# Step 3: Learning matrix fields}
  \STATE Compute the loss \[L(\theta) = \frac{1}{NN_tN_c^\prime}\sum_{j,k,l} \norm{ u_\theta(t_k, {c}_l^\prime, \psi_{j,k,l}) -  \nabla\psi_{j,k,l} }^2\]
  \STATE Update $\theta$ by the gradient of $L(\theta)$
  \ENDFOR
 \end{algorithmic} 
 \label{alg:train}
\end{algorithm}

\section{Sampling of $\cpsi$ in \eqref{eq:psi_tc}  in \cref{sec:EFMalg} for MMOT-EFM} \label{sec:appendix-sample}

In this section, we follow the notation in \cref{sec:EFMalg} and describe in more detail the construction of $\cpsi( c | \boldsymbol{x}_{C_0}) $ in \eqref{eq:psi_tc}, which is  
\[
  \psi\given{t, c}{x_{0,c},\boldsymbol{x}_{C_0}} = (1-t)x_{0,c} + t \cpsi \given{c}{\boldsymbol{x}_{C_0}}
\]
and the corresponding joint distribution of $\boldsymbol{x}_{C_0} \coloneqq \{x_{i}\}_{c_i \in C_0}$ on $D^{2 |C_0|}$ we used in step 2 of the training algorithm. 
In the final part of this section, we also elaborate how we couple $x_{0,c}$ with $\boldsymbol{x}_{C_0}$.

As we describe in the main manuscript, we introduce our EFM as a direct extension of FM as a method to transform one distribution to another through a learned vector field. 
In particular, we present in this paper an implementation of EFM which extends OT-CFM \cite{tong2023improving}, which aims to train FM as an approximate optimal transport between two distributions (source $\mu_0$ and target $\mu_1$).
To formalize this extension, we need to desribe OT as a minimization of Dirichlet Energy. 

\subsection{OT-CFM as approximate Dicirhlet energy minimization}  \label{sec:appendix-OT-CFM} 

 As is principally described in \cite{LAVENANT2019688}, OT emerges as a coupling of the source $\mu_0$ and the target $\mu_1$ constructed from the constant-speed geodesic (with respect to Wasserstein distance) between $\mu_0$ and $\mu_1$, which can be realized by minimizing the Dirichlet energy 
\begin{align}
\Diri(\mu) =  \inf_{v\colon I\times D \to\R^d} \Set{ \int_{[0,1]\times D} \frac{1}{2} \|v(t, x)\|^2 \mu_t(\dd{x})\dd t | \partial_t \mu_t(x)+ \Div_x(\mu_t(x)v(t,x))=0 } 
\end{align}

over all set of $\mu\colon [0, 1] \to \Prob{D}$ satisfying  $\mu(0) = \mu_0$, $\mu(1) = \mu_1$.  
It is well known that in the standard Euclidean metric space, the minimal energy is achieved by $\mu$ corresponding to $v(t, x)$ that is the derivative of a straight-line of form $\psi^T\given{t}{x} = tT(x) + (1-t)x$ where $T\colon D \to D$, and more particularly as the minimum of
\begin{align}
 \int_{D \times D} \frac{1}{2} \norm{x- y}^2 \pi(\dd x, \dd y) = \int_{D} \frac{1}{2} \|\partial_t \psi^T(t|x) \|^2 (I\times T)_\# \mu_0(dx)
\end{align}
over all $\pi \in \Prob{D \times D}$ with marginal distribution $\mu_0$ and $\mu_1$ or equivalently over all $T$ with $T \#\mu_0 = \mu_1$. 
In OT-CFM, this $\pi$(or $T$) is approximated by the discrete optimal transport solution over a pair of batches $B_0, B_1$ sampled respectively from source and target distributions.
Note that, in this view, $(I\times T)_\# \mu_0$ induces a distribution $Q$ on the path $[0,1] \to D$ generating $\psi^T(t|x)$ with randomness derived from $x$. 

Theorem 3.1 of \cite{yim2024improved} guarantees that the (batch)sample-averaged version of $\mu$ and the (batch)sample-averaged version of $v$ satisfies the continuity equation, thereby yielding the approximation of the dirichlet energy minimizing flow map.

\subsection{MMOT-EFM as approximate Dicirhlet energy minimization} \label{sec:appendix-MMOT-EFM} 

To mimic this construction in multi-marginal setting of EFM, we aim to approximate the solution to the minimization of 
\begin{align}
\Diri(\mu) =  \inf_{v\colon\Omega \times D \to \R^{d \times k}}  \Set{ \int_{\Omega\times D} \frac{1}{2} \|v(c, x)\|^2 \mu_\xi(\dd x) \dd c
| \partial_{c} \mu_\xi(x) + \Div_x(\mu(c,x)v(c,x))=0 } 
\end{align}
over all set of $\mu\colon \Omega \to \Prob{D}$ satisfying  $\mu(c_i) = \mu_i$ for all $c_i \in C_0$.
Note that when $\Omega = [0,1]$, this minimization problem (i.e. Dirichlet Problem) agrees with that of the OT problem on which the method of FM is established. 

Now, in a similar philosophy as FM, we would aim to approximate this Dirichlet energy through multi-marginal optimal transport \cite{MMOT2024Cuturi} over discrete samples. 
Now, under \textit{sufficient} regularity condition (Prop 5.6 \cite{LAVENANT2019688}),
we can similarly argue that there exists some probability $Q$ on the space $\mathcal{F} = H^1(\Omega, D)$ of a map from ``condition'' to ``data'' satisfying  
\begin{align}
\Diri(\mu) = \int_{\Omega \times \mathcal{F}}   \|\partial_c \psi(c)\|^2 Q(\dd \psi) \dd c  
\end{align}
and our goal winds down to finding the energy-minimizing distribution $Q$. 
In this endeavor, we implicitly find $Q$ by specifying a particular space of functions $\mathcal{F}$ and 
generating $\psi\colon \Omega \to D$ from a set of $\{(c_i, x_i)\}_{c_i \in C_0}$ of "condition value" and "observation" for jointly sampled $\{x_i\}_i$ as the regression
\begin{align}
\cpsi(\cdot | \{x_i\}_i) = \arg\min_{\psi \in \mathcal{F}}  \sum_{c_i \in C_0} \| \psi(c_i)  - x_i \|^2    
\end{align}
and minimize the energy with respect to the joint distribution $\pi$ on $D^{|C|}$ from which to sample $\{x_i\}_i$.
That is, we aim to minimize 
\begin{align}
\int \|\nabla_c  \cpsi(c | \{x_i\}_i ) \|^2 \pi(\{dx_i\}_i) \dd c   \label{eq:MMOT_cost} 
\end{align}
with respect to $\pi$. This, indeed, is in the format of MMOT problem, where $c(\{x_i\}_i) := \|\partial_c \psi(c | \{x_i\}_i ) \|^2$.
$\mathcal{F}$ can be chosen for example, as an RKHS or a space of linear function, so that the regression can be solved analytically with respect to $c$.

Just as is done in OT-CFM, we approximate this $\pi$ with the joint distribution over finite tuple of batches $\{B_i\}_i $ with each $B_i$ sampled from $\mu_i$ corresponding to condition $c_i$. 
This approximation is indeed the very $\pi$ that we adopt in MMOT version of our EFM in step 2.

Now, by the virtue of Theorem of principle-mass-alignment \ref{cor:PMAandpath}, we can argue that the (batch)sample-averaged distributions $\mu^\psi$  and the (batch)sample-averaged $v^\psi = \partial_c \psi$ solve the \textit{generalized} continuity equation, thereby yielding the approximation of the Dirichlet energy minimizing map $\mu: \Omega \to \Prob{D}$.     

Note that the above constructions of $\psi \sim Q$ is in complete parallel with that of OT-CFM. See Table\ref{tab:EFM_comparison} for the correspondences.
\begin{table}[htbp]
\centering
\caption{OT-CFM vs MMOT-EFM}
\label{tab:EFM_comparison}
\begin{tabular}{@{}lcc@{}}
\toprule
Framework & OT-CFM  &  MMOT-EFM \\ \midrule
$\mu$ & $[0,1] \to \Prob{D}$ & $\Omega \to \Prob{D}$\\
$\psi$ & $[0, 1] \to  D$ &  $\Omega \to  D$ \\
$v$ & $\partial_t \psi$  &  $\partial_c \cpsi$  \\
$(\mu, v)$ relation & Continuity & Generalized Continuity \\ 
Boundaries & $\{\mu_0, \mu_1\}$  &  $\{\mu_i\}_{c_i \in C_0}$   \\ 
Approximation & OT &  MMOT \\  
\bottomrule
\end{tabular}
\end{table}
We also note that this argument can be extended to $\tilde \Omega = [0, 1] \times \Omega$ in place of $\Omega$. 
However, because of the computational cost of MMOT, we construct our generative model from $\eqref{eq:psi_tc}$, which combines $\cpsi$ and the OT-CFM construction.
In the next section, we elaborate on the construction of the approximation of $\pi$ in \eqref{eq:MMOT_cost} from which to sample $\cpsi$ in \eqref{eq:psi_tc}

\subsection{Approximating MMOT} \label{sec:appendix-MMOT-empirical} 

In general, MMOT is computationally heavy, and even with the advanced methods like the multi-marginal Sinkhorn method developed in \citep{JMLR:v23:19-843}, the computational cost scales as $|B|^{|C|}$, where $|B|$ is the batch size and $|C|$ is the number of conditions to be simultaneously considered. 
To reduce this cost, we took the approach of approximating MMOT through clustering.  More particularly, 
when a batch from $B_i$ is sampled each from $\mu_i$ for condition $c_i$, we applied $K$-means nearest neighborhood clustering (KNN) to $B_i$, yielding sub-batches
$\{U_{ik} \}_{c_i \in C_0, k \in [1:K]}$ with mean values $\{m_{ik}\}_{c_i \in C_0, k \in 1:K}$, where $\cup_{k \in 1:K} U_{ik}  = B_i$.
Let $M_i = \{m_{ik}\}_{k \in [1:K]}$ be the set of cluster-means for batch $i$. Instead of conducting MMOT directly on batch $B_i$, we conduct the MMOT on $\{M_i\}_i$, whose cost will be on the order of $K^{|C|}$.
Applying $\operatorname{argmax}$ operations on the result of MMOT from methods like the Sinkhorn method, we can obtain the deterministic coupling $\pi_m = (\bigtimes_i T_i)_\# \Uniform{M_0}$
where $\Uniform{M_0}$ is the uniform distribution on $M_0$. 
After sampling $m_{0 k^*}  \sim  \Uniform{M_0}$, we couple $U_{i T_i(k^*)}$ with a method of user's choice, where $T_i(k^*)$ is an \textit{abuse of notation} satisfying  
$$m_{i T_i(k^*)} = T_i (m_{0 k^*}). $$
In our implementation of MMOT-EFM, we coupled $\{U_{i T_i(k^*)}\}_i$ with generalized-geodesic coupling as is used in \cite{fan2023generating}, with center distribution being the standard Gaussian with mean being the average of $\{U_{i T_i(k^*)}\}_i$.  
Although we provide a brief description of generalized-geodesic in reference \ref{sec:ggc-EFM-sample}, we would like to refer to \cite{AGS} for a more thorough study.  

Below, we summarize the sampling procedure of of $\{x_i\}_{c_i \in C_0}$ in $\psi(\cdot | \{x_i\}_{c_i \in C_0})$ of MMOT-EFM.
\begin{algorithm}[th!]
 \caption{MMOT sampling with Cluster}
 \begin{algorithmic}[1] 
 \renewcommand{\algorithmicrequire}{\textbf{Input:}}
 \REQUIRE Set of batches $\{B_i\}_i$ with each $B_i$ sampled from $p(\cdot | c_i)$
 \renewcommand{\algorithmicensure}{\textbf{Return:}}
 \ENSURE Joint sample $\{x_i\}_i$from $\{B_i\}_i$
   \item[] \texttt{\color{teal}\# Step 1: Cluster MMOT setup}
  \STATE Cluster each $B_i$ as $\cup_{k \in [1:K]} U_{ik}  = B_i$ with $\operatorname{mean}(U_{ik}) = m_{ik}$
  \STATE Set $M_i = \{m_{ik}\}_{k \in [1:K]}$ 
  \STATE Use MMOT to produce coupling on $\{M_i\}_i$ via $\{T_i\}_i \# \Uniform{M_0}$
    \item[] \texttt{\color{teal}\# Step 2: Sampling}
    \STATE Sample $m_{0k^*}$ from $\Uniform{M_0}$
    \STATE Compute $m_{i T_i(k^*)}  := T_i(m_{0k^*})$
    \STATE Jointly sample from  $\{U_{i T_i(k^*)}\}$ with the method of user's choice, preferrably with deterministic coupling, such as another round of MMOT or generalized-geodesic.
 \end{algorithmic} 
 \label{alg:train}
\end{algorithm}

\subsection{Coupling of $\{ x_{0,c_i} \}_{c_i \in C_0}$ and $\{x_i\}_{c_i \in C_0} $} 
Ideally, it is more closely aligned with the theory of Dirichlet energy to include the source distributions $\{\mu(0, c_i)\}_i$ into the set of distributions to be coupled 
in the MMOT, and enact the argument in \cref{sec:appendix-MMOT-EFM} with $\tilde \Omega = [0, 1] \times \Omega$ in place of $\Omega$. 
As mentioned in the previous section, however, the cost of empirical MMOT scales exponentially with the number of distributions to couples. We, therefore, took an alternative coupling strategy as a computational compromise.  

First, recall from the step 1 of \cref{sec:EFMalg} that $\{ x_{0,c_i} \}_{c_i \in C_0}$ are already coupled with common standard Gaussian sample in the form of $\mu_{0,c} = \operatorname{Mean}[D_c] + \Gaussian{0}{I}$.
To couple $\{ x_{0,c_i} \}_{c_i \in C_0}$  with  $\{x_i\}_{c_i \in C_0} $ which are deterministically coupled through the routine of Section \ref{sec:appendix-MMOT-empirical} as $\{x_i\}_{c_i \in C_0} = \{\mathcal{T}_i (x_0)\}_{c_i \in C_0}$ with $x_0$ sampled from $p( | c_0)$, we may simply couple  $x_{0,c_0}$ with $x_0$ and this will automatically induce the deterministic coupling of $\{ x_{0,c_i} \}_{c_i \in C_0}$ and $\{x_i\}_{c_i \in C_0} $.  
In particular, if $B_{0, c_0}$ is a batch of samples from $p_0(\cdot| c_0)$ and $B_{1, c_0}$ is a batch of samples from $D_{c_0}$ in the step1 of the training, we may couple  $B_{0, c_0}$ with $B_{1, c_0}$ with optimal transport with the methods of user's choice, such as those provided in \cite{flamary2021pot}.

\section{A remark on Generalized Geodesic coupling(\ggc) and the sampling of $\cpsi$ in \eqref{eq:psi_tc}  in \cref{sec:EFMalg} for \ggc-EFM}  \label{sec:ggc-EFM-sample} 

As we have mentioned in Section \ref{subsec:train}, EFM can be defined with any distribution $\psidist\in\Prob{\Psi}$ on the space of functions $\Psi\coloneqq\Set{\psi\colon\tcset\to D|\psi\text{ is differentiable}}$ satisfying the boundary conditions \eqref{eq:BC_psi}.  
We also present still another construction of $\cpsi$ derived from different coupling. 
\subsection{Generalized geodesic coupling}
Generalized geodesic of $\{\mu_i\}$ with base $\nu \in \Prob{D}$, also known in the name of linear optimal transport \cite{moosmuller2020linear} in mathematical literatures, was introduced by \cite{AGS} as 
\begin{align}
\rho_a \coloneqq \left(\sum_{i=1} a_i T_i \right)\# \nu ,  ~~~ a \in \Delta_{m-1}
\end{align}
where $T_i$ is the optimal map from $\nu$ to $\mu_i$ and $\Delta_{m-1}$ is the set of all $\{a_i\}_{i=1}^m $ with $\sum_i a_i = 1$.
This is indeed one of the generalizations to the MacCann's interpolation used in OT between $\mu_0$ and $\mu_1$ through the expression 
$$\rho_t \coloneqq ((1-t)\operatorname{Id} + t T) \# \mu_0,\ t\in [0,1]$$ 
which runs along the geodesic in $\Prob{D}$ with respect to Wasserstein distance. 
Note that $\rho_a$ in Generalized Geodesic  provides not only provides deterministic coupling of $\{\mu_i\}$ through $\rho_{e_i}= {T_i}_\# \nu = \mu_i$, it also interpolates unknown distributions for any $a \in \Delta_{m-1}$. 
We would refer to the deterministic coupling in the form of ${T_i}_\# \nu = \mu_i$ as \ggc-coupling.

\subsection{GGc sampling of $\cpsi$} 
In analogy to the sampling procedure of $\cpsi(\cdot | \{x_i\}_i )$ in MMOT-EFM with MMOT-coupled $\{x_i\}_i$, we may sample $\cpsi(\cdot | \{x_i\}_i )$ with $\{x_i\}_i$ that is jointly sampled with \ggc-coupling.  
We emphasize that $\cpsi$ constructed in such a way does not necessarily minimize an explicit objective as Dirichlet energy and
this might result in EFM with a somewhat erratic style transfer. 
For more empirical investigations, please see the main manuscript.

\section{Experiment details for conditional molecular generation}  \label{sec:exp_detail_cond_mol_gen} 
\subsection{Metrics}
To evaluate our conditional generation, we use the pre-trained VAE model to encode EFM-generated latent vectors into molecules and compute the Mean Absolute Error(MAE) between the generated molecule's property values and the conditioning property values. MAEs are calculated separately for interpolation and extrapolation. All MAEs are first calculated for each property and then averaged for both properties.

\subsection{Dataset and baselines}

We first trained a Site-information-encoded Junction Tree Variational Autoencoder(SJT-VAE) model, which is a variant implementation of Junction Tree Variational Autoencoder(JT-VAE)\citep{jin2018junction}. SJT-VAE was initially designed to eliminate the arbitrariness of JT-VAE and enable applications such as RJT-RL\citep{ishitani2022molecular}. We opted for SJT-VAE over JT-VAE due to its superior reconstruction accuracy and faster training times. However, we see no reason that similar results cannot be reproduced with the original implementation of JT-VAE. 

Our SJT-VAE model is trained on ZINC-250k\citep{Akhmetshin2021}. A subset of 80000 molecules are labeled with the number of HBAs and the number of rotatable bonds. All labels are computed using RDKit. These 80000 molecules are first binned into a 2D matrix based on their labels. 
From this 2D matrix, we selected an area where data are concentrated: the number of HBAs between 3 and 6 and the number of rotatable bonds between 2 and 6. To facilitate the training workflow, training data are up-sampled or capped to 5000 per bin. 
\begin{figure}[htbp]
\centering
\includegraphics[scale=.5]{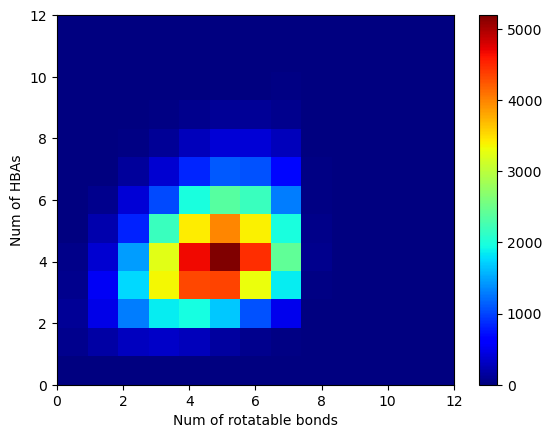}
\label{fig:gt_2dsyn}
\caption{Training set rotatable bonds and HBAs label distribution}
\end{figure}

All flow matching-based models, including MMOTEFM and baselines, are trained with a batch size of 250 and the learning rate of $1\mathtt{e}^{-3}$ for 160,000 iterations. 

\section{Computational Resources}
All models were trained on a single Nvidia V100-16G GPU, and 100 epochs were completed within 4 hours. Training for the MMOT-EFM model is performed on a single Nvidia V100-16G GPU within 2.5 hours.   The results of MMOT-EFM for synthetic experiments were yielded from a model trained over 100000 iterations in 5 hours. 

\section{Additional figures}

  \begin{figure}[ht]
    \centering
    \includegraphics[width=\linewidth]{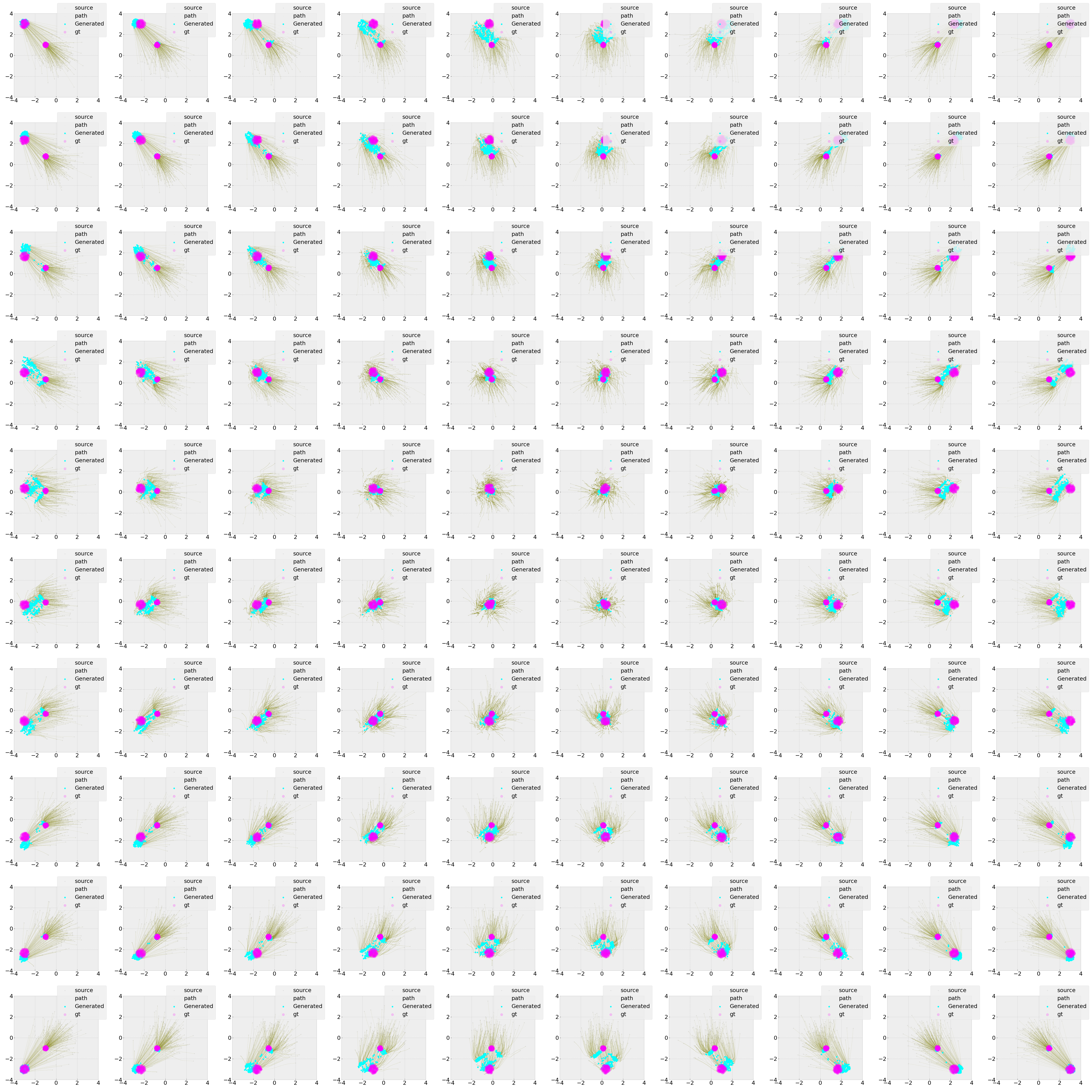}
    \caption{Conditional generation of the synthetic dataset by FM, organized in the grid for two axes of conditions. }
  \end{figure}
  \begin{figure}[ht]
    \centering
    \includegraphics[width=\linewidth]{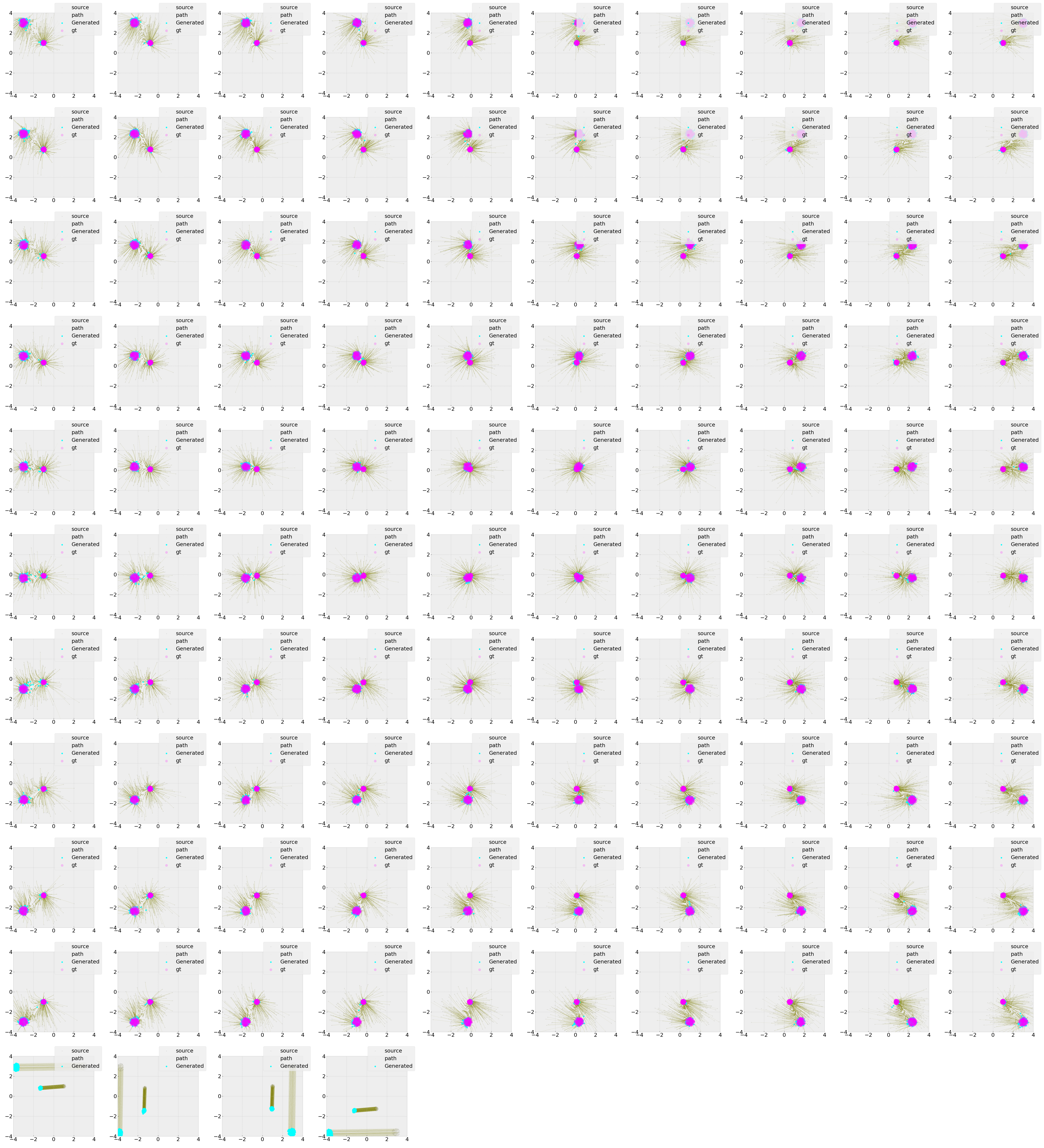}
    \caption{Conditional generation of the synthetic dataset by MMOT-EFM, organized in the grid for two axes of conditions.  The figures in the bottom row are the result of style transfer.}
  \end{figure}


  \begin{figure}[ht]
    \centering
    \includegraphics[width=\linewidth]{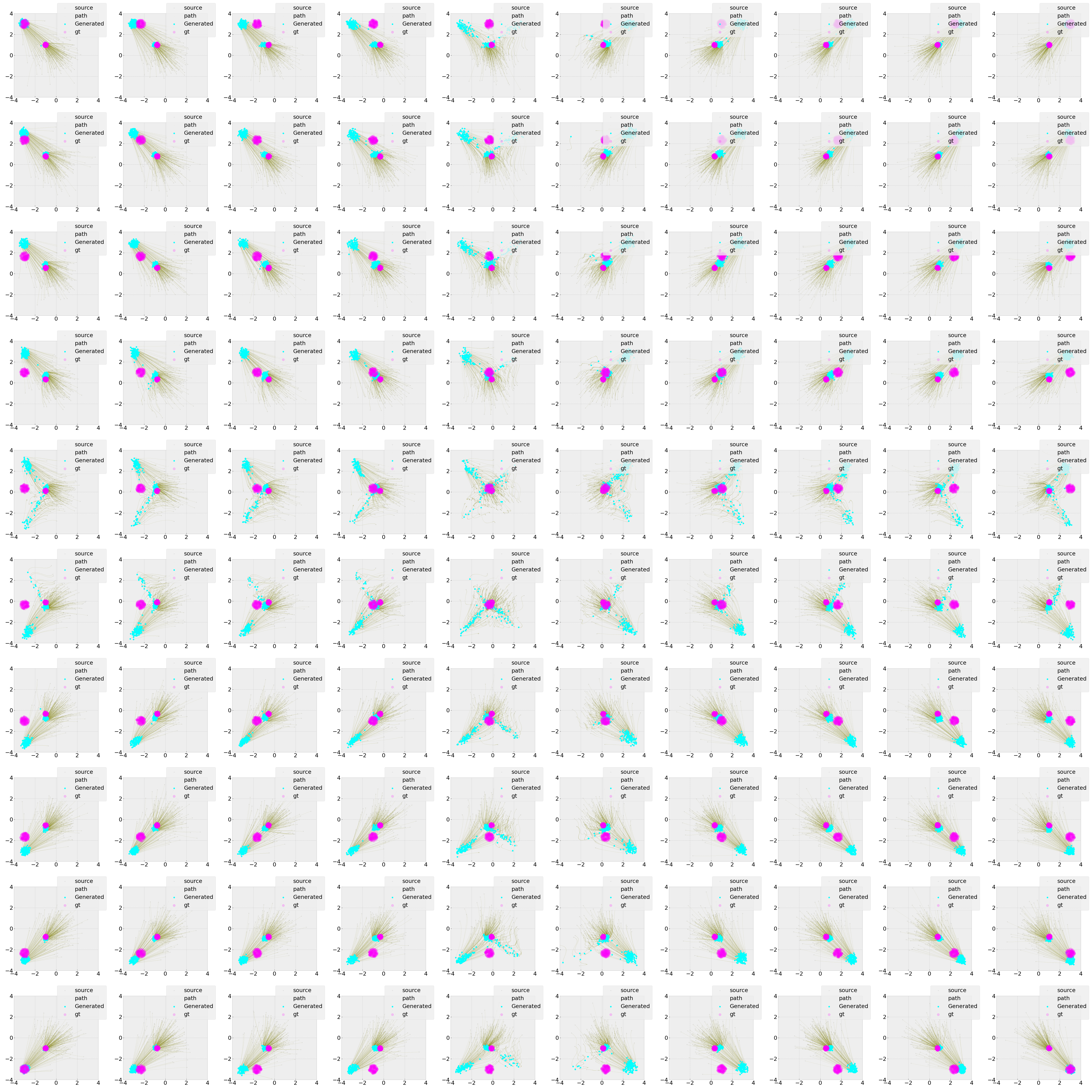}
    \caption{Conditional generation of synthetic dataset by Baysian(COT)-FM with $\beta=10^2$, organized in grid for two axis of conditions.
    }
    \label{fig:betafm_all}
  \end{figure}

\end{document}